%% file: iclr2027_conference.tex
\documentclass{article} 
\usepackage{iclr2027_conference,times}

\input{math_commands.tex}

\usepackage{hyperref}
\usepackage{url}
\usepackage{amsthm}
\usepackage{graphicx}
\usepackage{subcaption}
\usepackage{booktabs}
\usepackage{array}
\usepackage{multirow}
\usepackage{wrapfig}
\usepackage{color}
\usepackage{algorithm}
\usepackage{algpseudocode}
\usepackage{amsmath}

\newtheorem{hypothesis}{Hypothesis}
\newtheorem{assumption}{Assumption}
\newtheorem{proposition}{Proposition}

\title{Learning Normal Diffusion Dynamics for \\ Backdoor Defense in Text-to-Image Models}

\author{
\begin{tabular}[t]{@{}l@{}}
Junjian Li\textsuperscript{1}\thanks{Equal contribution.}, Xiaolong Liu\textsuperscript{2}\footnotemark[1], Peng Sun\textsuperscript{2}\thanks{Corresponding authors.}, Liantao Wu\textsuperscript{3}, Linghan Chen\textsuperscript{4}, \\ 
\textbf{Yudong Gao}\textsuperscript{5}, \textbf{Honglong Chen}\textsuperscript{6}\footnotemark[2] \\
\normalfont \textsuperscript{1}Geely, \textsuperscript{2}Hunan University, \textsuperscript{3}East China Normal University,\\
\normalfont \textsuperscript{4}University of Adelaide, \textsuperscript{5}The Hong Kong University of Science and Technology, \\ 
\normalfont \textsuperscript{6}China University of Petroleum (East China) \\
\normalfont \texttt{psun@hnu.edu.cn}, \texttt{chenhl@upc.edu.cn}
\end{tabular}
}

\iclrfinalcopy 

\begin{document}

\maketitle

\begin{abstract}
Backdoor attacks pose a serious threat to the secure deployment of text-to-image (T2I) diffusion models. Existing defenses typically detect backdoors from specific abnormal patterns in internal representations, which may limit their generalizability with the emergence of increasingly diverse attack mechanisms. In this paper, we study backdoor defense of T2I diffusion models from a transition-dynamics perspective. We observe that benign diffusion trajectories exhibit structured and timestep-dependent transition patterns from cross-attention, latent and noise spaces, whereas backdoor attacks tend to induce deviations from such normal evolution. Motivated by these observations, we propose \textbf{Normal Diffusion Dynamics Learning (\textit{NDDL})}, 
a novel backdoor defense framework that learns the normal transition dynamics of diffusion trajectories utilizing only benign samples. \textit{NDDL} constructs compact multi-space trajectory representations and trains a timestep-conditioned dynamics model to predict the diffusion evolution. In the inference phase, deviations between the observed and predicted transitions are exploited to quantify dynamics inconsistency for backdoor detection. \textit{NDDL} further enables trigger localization without any prior knowledge of the embedded backdoor by performing substitution with low-semantic words. Extensive experiments for diverse backdoor attacks demonstrate the effectiveness and generalizability of our proposed \textit{NDDL}. 
\end{abstract}

\section{INTRODUCTION}
Text-to-image (T2I) diffusion models have achieved remarkable success in high-quality image synthesis, facilitating widespread real-world applications~\citep{t2i2,t2i4,t2i5,t2i6}. However, the increasing prevalence of publicly available models introduces substantial security risks~\citep{risk1,risk2,risk3,risk4,new3,new4}. In particular, backdoor attacks can embed hidden behaviors in T2I diffusion models~\citep{backdoor1,backdoor2}. Thus, a backdoored model behaves normally on benign prompts while generating the attacker-specified contents once the trigger is present~\citep{badt2i,eviledit,MasqLoRA,rickrolling}. Since the downstream users typically have no prior knowledge of the attack mechanism, the reliable backdoor defense is essential for the secure deployment of T2I diffusion models~\citep{zhang2025adversarial,zhang2026trustworthy}.   

Existing defense strategies typically exploit abnormal behaviors induced by backdoors, including distinct patterns in attention, noise prediction, neuron activations ~\citep{t2ishield,terd,NaviT2I}. While these methods indicate that backdoor attacks can leave detectable traces, the resulting detection criteria are usually associated with particular representations or abnormal phenomena. This presents a fundamental challenge for general backdoor defense. Since different types of attacks may rely on different mechanisms or objectives, the internal activation patterns of backdoored models vary across representation spaces and denoising stages~\citep{NaviT2I,stediff}. Consequently, the defenses based on attack-specific characteristics limit the generalizability with the emergence of increasingly diverse backdoor attacks.


We therefore reconsider the defense problem from a different perspective: \textbf{Can we shift the focus from how backdoor behaviors appear abnormal to how benign diffusion normally behaves?} This perspective is natural for T2I diffusion models, whose generation process is based on a sequence of timestep-dependent denoising transitions~\citep{t2i1,li2024alleviating,t2i2,t2i3,transition,song2020score}. 

\begin{wrapfigure}{r}{0.5\textwidth}
    \centering
    \begin{subfigure}{0.24\columnwidth}
        \centering
        \includegraphics[width=\textwidth]{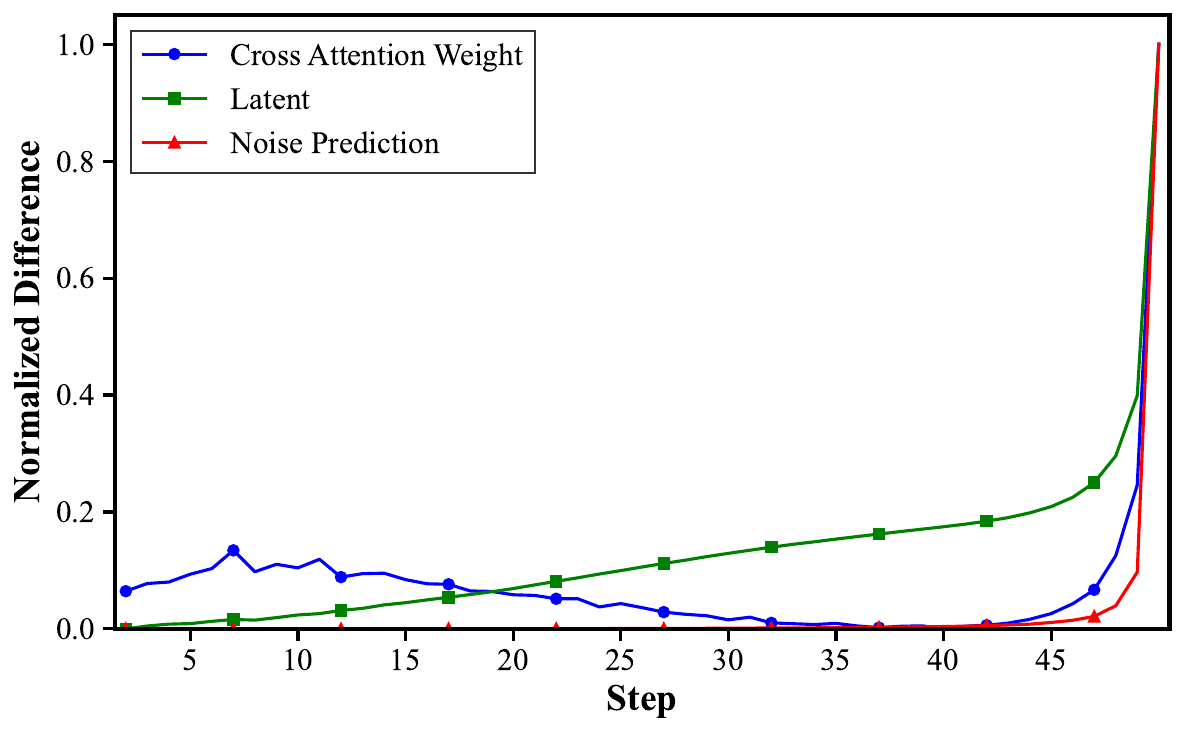}
        \caption{Averaged curves}
        \label{ob1:a}
    \end{subfigure}
    \hfill 
    \begin{subfigure}{0.24\columnwidth}
        \centering
        \includegraphics[width=\textwidth]{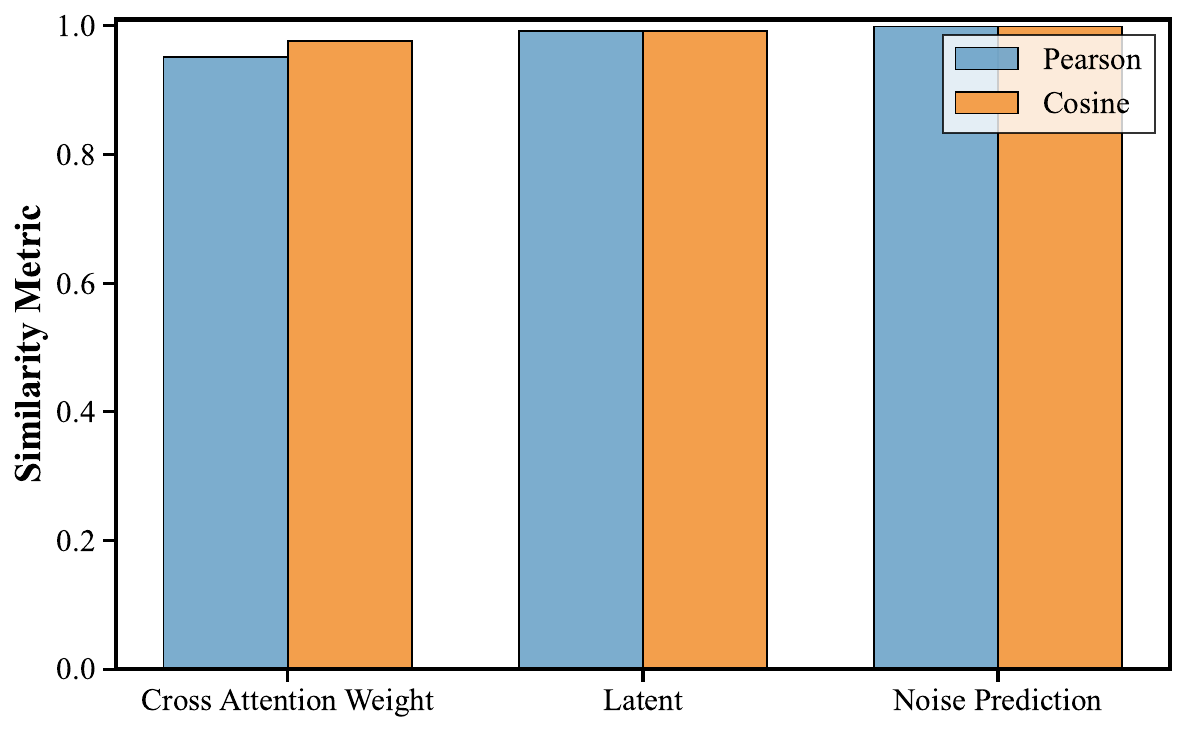}
        \caption{Similarity results}
        \label{ob1:b}
    \end{subfigure}
    \caption{Timestep-dependent evolution patterns of transition differences for the three representations on Stable Diffusion v1.5. We measure the transition differences for 1000 benign prompts. Figure~\ref{ob1:a} shows the averaged curves of the three representations. Figure~\ref{ob1:b} presents the Pearson correlation and cosine similarity between each individual transition trajectory and the corresponding mean results. More details and observation results are available in Appendix~\ref{ob1_sub_appendix}.}
    \label{ob1}
\end{wrapfigure}

Instead of inducing an obvious anomaly at the particular timestep, backdoor may perturb the denoising transitions and gradually deviate the generation from the normal evolution. To this end, we analyze the model diffusion trajectories from cross-attention, latent and noise spaces. Our empirical study reveals two key observations: (1) \textbf{Observation I:} As shown in Figure~\ref{ob1}, \textbf{benign trajectories present structured and timestep-dependent transition patterns}; (2) \textbf{Observation II:} As shown in Figure~\ref{ob2}, \textbf{backdoors consistently introduce deviations to the normal evolution across different attacks}. These observations suggest that benign diffusion exhibits learnable transition rule, while backdoor tends to introduce perturbations. Motivated by the above analysis, we propose \textbf{Normal} \textbf{Diffusion} \textbf{Dynamics} \textbf{Learning} (\textit{\textbf{\textit{NDDL}}}), a novel backdoor defense framework for learning the normal transition dynamics of diffusion trajectories utilizing only benign samples. \textit{NDDL} first constructs compact trajectory representations from cross-attention, latent and noise spaces. Then, a dynamics model is trained on benign trajectories to predict the next-step representation. During inference, the deviations between the observed and predicted transitions are utilized to quantify dynamics inconsistency, with large deviations indicating potential backdoor attack. Furthermore, \textit{NDDL} can localize the potential trigger tokens by performing substitution with low-semantic words, enabling trigger identification without requiring prior knowledge of the embedded backdoor.
\vspace{-0.8em}
\begin{figure}[htbp]
    \centering
    \begin{subfigure}[b]{0.32\textwidth}
        \centering
        \includegraphics[width=\textwidth]{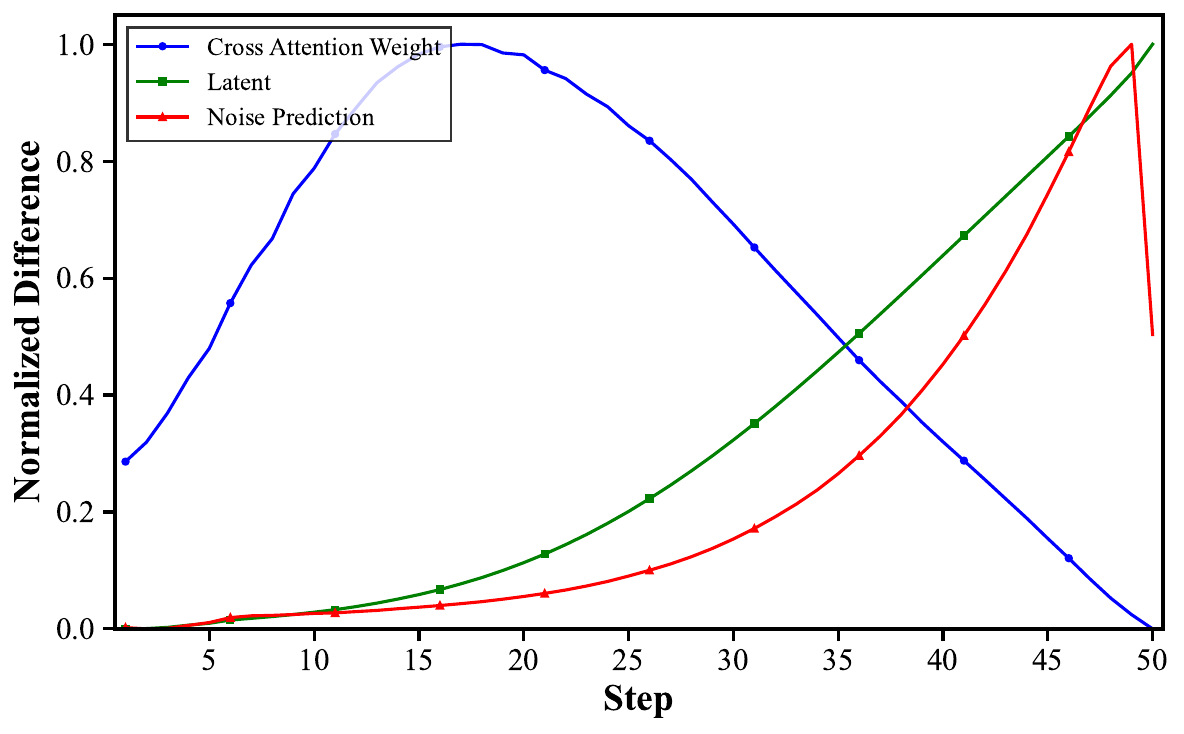} 
        \caption{BadT2I (Pixel)}
        \label{ob2:a}
    \end{subfigure}
    \hfill 
    \begin{subfigure}[b]{0.32\textwidth}
        \centering
        \includegraphics[width=\textwidth]{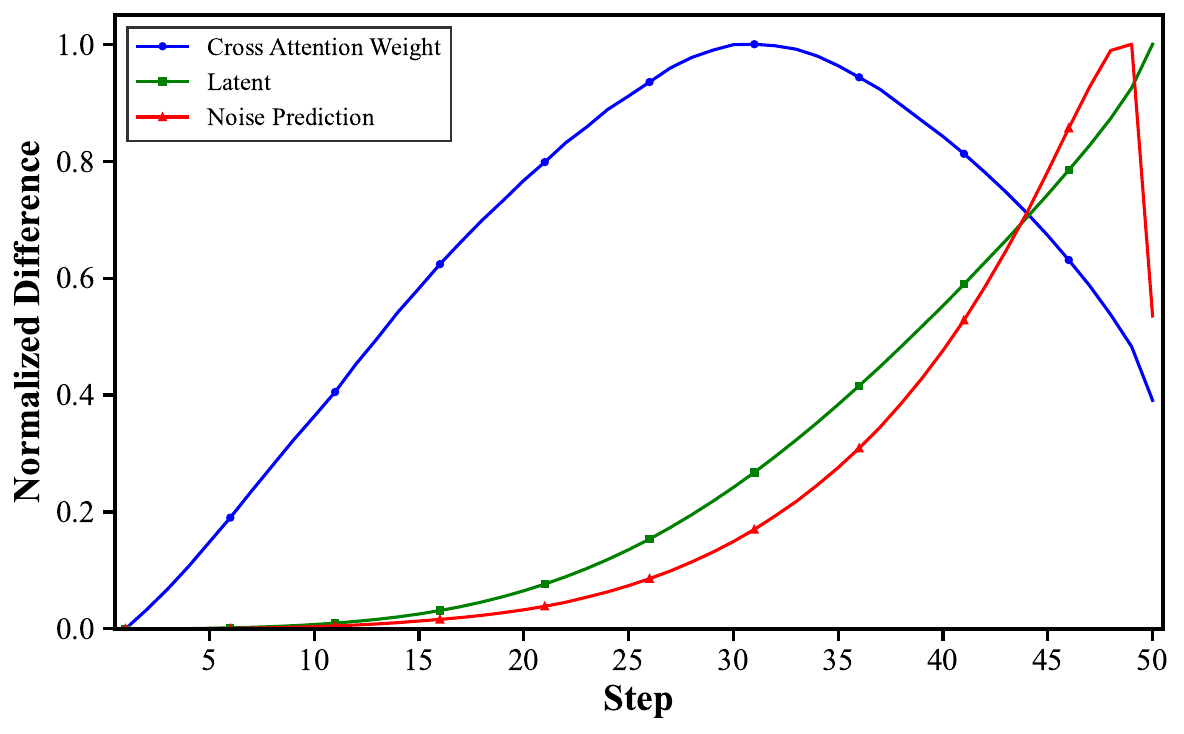}
        \caption{EvilEdit}
        \label{ob2:b}
    \end{subfigure}
    \hfill
    \begin{subfigure}[b]{0.32\textwidth}
        \centering
        \includegraphics[width=\textwidth]{fig/observation/ob2/MasqLoRA-main.pdf}
        \caption{MasqLoRA}
        \label{ob2:c}
    \end{subfigure}
    
    \caption{Discrepancies in diffusion trajectories between benign and backdoor prompts of different backdoor attacks in Stable Diffusion v1.5. More details can be seen in Appendix~\ref{ob2_sub_appendix}.}
    \label{ob2}
\end{figure}

\vspace{-1.5em}
Our contributions are summarized as follows:
\begin{itemize}
\item We introduce a new transition-dynamics perspective for backdoor attacks in T2I diffusion models, showing that trigger effects can be viewed as the deviations from the normal evolution of diffusion trajectories rather than the anomalies in the particular spaces.
\item We propose a backdoor defense framework \textit{NDDL} for learning normal diffusion dynamics, leveraging compact multi-space trajectory representations for backdoor detection and localizing suspicious trigger tokens performing substitution with low-semantic words.
\item We conduct extensive experiments against diverse backdoor attacks, demonstrating that \textit{NDDL} achieves effective and generalizable performance in both backdoor detection and trigger localization compared with existing defense methods.
\end{itemize}

\section{Related Work}
\textbf{Backdoor attacks in T2I diffusion models.} Backdoor attacks on deep neural networks inject triggers into inputs to hijack model behavior~\citep{ba1,ba2,ba3,badnets,defeat,new2}. Recently, such attacks have extended to generative models, particularly T2I diffusion models. Rickrolling~\citep{rickrolling} aligns the feature representations of backdoor and target prompts within the text embedding space while preserving the original feature embedding of benign samples. BadT2I~\citep{badt2i} pioneers prompt-based backdoor attacks through data poisoning. By leveraging a regularization loss,
T2I diffusion models can be efficiently backdoored with only a few fine-tuning steps. EvilEdit~\citep{eviledit} directly edits the projection matrices in the cross-attention layers to achieve projection
alignment between a trigger and the corresponding backdoor target. MasqLoRA~\citep{MasqLoRA} leverages an independent LoRA module as the attack vehicle to stealthily inject malicious behavior into T2I diffusion models. STEBA \citep{stediff} proposes a spatio-temporally acceleration strategy for backdoor injection, improving computational efficiency and reducing memory overhead.

\textbf{Backdoor defenses in T2I diffusion models.} Backdoor defense mechanisms in discriminative models often rely on input perturbations or behavioral monitoring~\citep{bd1,bd2, bd3,new1}, and similar ideas have been extended to T2I diffusion models. T2IShield~\citep{t2ishield} identifies backdoor behavior via assimilation patterns in cross-attention maps. UFID~\citep{UFID} is a black-box method, using image level similarity to separate benign from backdoored outputs without internal access. NaviT2I~\citep{NaviT2I} navigates T2I diffusion models to prevent malicious inputs by analyzing neuron activation variations caused by input tokens. STEDF~\citep{stediff} formulates backdoor detection as a spatio-temporal feature analysis problem, exploiting weight enrichment patterns and temporal anisotropy to distinguish malicious models from benign ones. However, existing backdoor defenses for T2I diffusion models primarily rely on detecting specific abnormal signatures associated with backdoor activation. These approaches often struggle to generalize against increasingly diverse attack mechanisms due to the lack of a unified characterization of normal generation processes. By learning normal transition dynamics exclusively from benign samples, \textit{NDDL} does not rely on specific attack artifacts, thereby offering a more generalizable and principled defense paradigm against unknown and diverse backdoor threats.

\section{Transition Dynamics Analysis}\label{theory}
\subsection{Transition Dynamics Formulation}
Let $x_{t} =(A_{t}, z_{t}, \epsilon_{t})$ denote the diffusion state at timestep $t$. Rather than directly modeling the raw high-dimensional diffusion states, we map them into a compact trajectory representation $r_t=\phi(x_t)\in\mathbb{R}^{d}$, where $\phi(\cdot)$ is the representation mapping.
Motivated by Observation I, from a dynamical-system perspective, we model benign diffusion evolution as a timestep-conditioned transition process $r_{t+1} = F_{t}(r_{t})$, where $F_{t}(\cdot)$ represents the normal transition rule at timestep $t$. Observation II demonstrates that backdoor trajectories present obvious deviations from their benign counterparts for various backdoor attacks. Thus, we model the backdoor diffusion evolution as $r_{t+1}^{b} = F_t(r_t^{b})+\delta_t$, where $r_t^{b}$ represents the mapping trajectory representation related to the backdoor attack and $\delta_t$ is the perturbation induced by backdoor. Thus, we obtain:
\begin{hypothesis}[Transition Dynamics Deviation Hypothesis]
    Backdoor attacks perturb the normal transition dynamics of the diffusion evolution, yielding the trajectory that deviates from the normal one.
\end{hypothesis}
This hypothesis provides a unified perspective on the deviations observed from the three representations. Based on it, we aim to investigate two questions: (i) How does the backdoor perturbation affect the trajectory? (ii) Can backdoor-induced deviations be revealed by normal transition prediction?

\subsection{How Does The Backdoor Perturbation Affect the Trajectory?}
To investigate how a transition perturbation affects the subsequent trajectory, we first present a mild regularization assumption on normal diffusion dynamics.
\begin{assumption}[Time-Conditioned Local Smoothness]\label{assumption1}
    For each timestep $t$, the normal transition rule $F_{t}(\cdot)$ is locally Lipschitz around normal diffusion trajectory:
    \begin{equation}
    ||F_t(r)-F_t(r')||_2 \leq L_t||r-r'||_2, \label{Lipschitz_smooth}
    \end{equation}
    where $L_{t}$ varies with timestep.
\end{assumption}

Assumption~\ref{assumption1} does not require the diffusion trajectory to evolve smoothly over time. Instead, it emphasizes that the adjacent states at the same timestep exhibit the locally bounded differences after transition. Then, we characterize how the backdoor-induced transition perturbation affects subsequent trajectory.
\begin{proposition}[Propagation of Trigger-Induced Deviations]\label{proposition1}
Let $e_t=r_t^{b}-r_t$ denote the difference between the backdoor and benign trajectories. From Assumption~\ref{assumption1}, $
||e_{t+1}||_2\leq L_t||e_t||_2+||\delta_t||_2$ can be obtained. Recursively, for any $t$, there is:
\begin{equation}
\|e_t\|_2 \leq
\left(\prod_{j=0}^{t-1}L_j\right)\|e_0\|_2 + \sum_{k=0}^{t-1} \left( \prod_{j=k+1}^{t-1} L_j \right) \|\delta_k\|_2.
\end{equation}
\end{proposition}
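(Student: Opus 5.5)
The plan is a one-step estimate followed by a discrete Gr\"onwall-type induction on $t$.

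\emph{One-step bound.} Subtract the benign transition $r_{t+1}=F_t(r_t)$ from the backdoor transition $r_{t+1}^{b}=F_t(r_t^{b})+\delta_t$. This gives
\begin{equation*}
e_{t+1}=F_t(r_t^{b})-F_t(r_t)+\delta_t .
\end{equation*}
By the triangle inequality, $\|e_{t+1}\|_2\le \|F_t(r_t^{b})-F_t(r_t)\|_2+\|\delta_t\|_2$. Applying Assumption~\ref{assumption1} with $r=r_t^{b}$ and $r'=r_t$ then yields $\|e_{t+1}\|_2\le L_t\|e_t\|_2+\|\delta_t\|_2$, which is the stated one-step inequality.

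\emph{Unrolling by induction.} I would prove the displayed bound by induction on $t$, using the conventions that an empty product equals $1$ and an empty sum equals $0$.
\begin{itemize}
\item \textbf{Base case $t=0$.} The bound reads $\|e_0\|_2\le\|e_0\|_2$, which is trivial.
\item \textbf{Inductive step.} Assume the bound at $t$. Multiply it by $L_t\ge 0$ and add $\|\delta_t\|_2$. The first term becomes $\left(\prod_{j=0}^{t}L_j\right)\|e_0\|_2$. Each summand $\left(\prod_{j=k+1}^{t-1}L_j\right)\|\delta_k\|_2$ becomes $\left(\prod_{j=k+1}^{t}L_j\right)\|\delta_k\|_2$. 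The new term $\|\delta_t\|_2$ is exactly the $k=t$ summand, since its product $\prod_{j=t+1}^{t}L_j$ is empty and equals $1$. Chaining this with the one-step bound at $t$ gives the formula at $t+1$.
\end{itemize}
Multiplying the inequality by $L_t$ preserves its direction only because Lipschitz constants are nonnegative; this should be noted explicitly in the write-up.

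\emph{Main obstacle.} The only genuinely delicate point is that Assumption~\ref{assumption1} is \emph{local}: it holds only near the normal trajectory. Applying it at every step needs $r_t^{b}$ to stay inside the neighborhood where the bound with constant $L_t$ is valid. I would state this as a standing requirement, namely that the Lipschitz neighborhood at step $t$ contains both $r_t$ and $r_t^{b}$. Equivalently, I would interpret $L_t$ as the Lipschitz constant on a region containing both trajectories.

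If a more self-contained argument is wanted, I would first try to guarantee this requirement through the bound itself. Suppose the neighborhood at each step has radius $\rho_t$ and the right-hand side of the displayed inequality stays below $\rho_t$. Then the same induction keeps $r_t^{b}$ inside the neighborhood, and the argument closes.
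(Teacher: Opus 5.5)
Your proposal is correct and follows the paper's proof. You use the same one-step bound (subtract the two transitions, apply the triangle inequality and Assumption~\ref{assumption1}) and then unroll the recursion; the paper unrolls with an ellipsis, while you make it rigorous by induction with empty-product and empty-sum conventions. Your remark that the local Lipschitz bound must actually hold at $r_t^{b}$, e.g.\ by keeping the right-hand side below the neighborhood radius, addresses a point the paper silently assumes and is a worthwhile addition rather than a change of route.
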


Proposition~\ref{proposition1} demonstrates that backdoor-induced perturbations can propagate along the diffusion trajectory. This motivates trajectory-level consistency analysis rather than single-step detection. The proof of Proposition~\ref{proposition1} is provided in Appendix~\ref{proof_p1}.

\subsection{Can Backdoor-Induced Deviations Be Revealed by Normal Transition Prediction?}
Furthermore, we investigate whether the backdoor-induced transition deviations can be detected through normal transition predictions. Given a predictor $F_{\theta}$ from benign trajectories to approximate the normal transition rule $F_{t}$, the approximation error can be bounded as:
\begin{equation}
 ||F_{\theta}(r, t) - F_{t}(r)||_{2} \leq \eta_{t}, \label{bounded}
\end{equation}
where $\eta_{t}$ is the prediction error at timestep $t$.

Next, we elaborate the correlation between backdoor-induced perturbation and the transition prediction error.
\begin{proposition}[Backdoor-Induced Prediction Inconsistency]\label{proposition2}
$R^{c}_{t} = ||r_{t + 1} - F_{\theta}(r_{t}, t)||_{2}$ and $R^{b}_{t} = ||r_{t + 1}^{b} - F_{\theta}(r_{t}^{b}, t)||_{2}$ are the transition prediction errors of benign and backdoor trajectories, respectively. Then, the following two bounds hold:
\begin{equation}
R_t^{c}\leq \eta_t,\quad R_t^{b}\geq||\delta_t||_2-\eta_t.
\end{equation}
\end{proposition}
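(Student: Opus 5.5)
The plan is to prove both bounds directly from the two modeling equations in Section~\ref{theory}, the benign rule $r_{t+1}=F_t(r_t)$ and the backdoor rule $r^{b}_{t+1}=F_t(r^{b}_t)+\delta_t$, together with the approximation bound~\eqref{bounded}. In each case I will substitute the modeling equation into the definition of the residual and then apply the triangle inequality in the appropriate direction. Assumption~\ref{assumption1} and Proposition~\ref{proposition1} are not needed here, because each residual involves only a single transition evaluated at the observed state.

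\textbf{Benign bound.} Since $r_{t+1}=F_t(r_t)$, we have
\begin{equation*}
R^{c}_t=\|F_t(r_t)-F_\theta(r_t,t)\|_2 .
\end{equation*}
Applying~\eqref{bounded} at $r=r_t$ gives $R^{c}_t\le\eta_t$ immediately.

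\textbf{Backdoor bound.} Substituting the backdoor rule gives
\begin{equation*}
R^{b}_t=\|\delta_t+(F_t(r^{b}_t)-F_\theta(r^{b}_t,t))\|_2 .
\end{equation*}
The reverse triangle inequality $\|a+b\|_2\ge\|a\|_2-\|b\|_2$ then yields $R^{b}_t\ge\|\delta_t\|_2-\|F_\theta(r^{b}_t,t)-F_t(r^{b}_t)\|_2$. Applying~\eqref{bounded} at $r=r^{b}_t$ bounds the subtracted term by $\eta_t$, which gives $R^{b}_t\ge\|\delta_t\|_2-\eta_t$.

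\textbf{Main obstacle.} The only delicate point is that~\eqref{bounded} must hold at the backdoor state $r^{b}_t$, not only along benign trajectories. I will read~\eqref{bounded} as holding uniformly in $r$ over the region visited by the trajectories at timestep $t$, which is how the paper states it, with no restriction on $r$.

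If the bound is only known on benign states, I would fall back on Assumption~\ref{assumption1}. Assuming $F_\theta(\cdot,t)$ is Lipschitz with some constant $L'_t$, the triangle inequality gives
\begin{equation*}
\|F_\theta(r^{b}_t,t)-F_t(r^{b}_t)\|_2\le \eta_t+(L_t+L'_t)\|e_t\|_2 ,
\end{equation*}
which recovers a weakened version of the second bound with $\eta_t$ replaced by this larger quantity. For the stated proposition, however, the uniform reading suffices, and the proof consists of the two triangle-inequality steps above.
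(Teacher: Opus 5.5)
Your proof is correct and matches the paper's own argument: substitute $r_{t+1}=F_t(r_t)$ and $r^{b}_{t+1}=F_t(r^{b}_t)+\delta_t$ into the residuals, then apply the approximation bound directly for $R^{c}_t$, and for $R^{b}_t$ use the reverse triangle inequality followed by the approximation bound at $r^{b}_t$. The paper also applies the approximation bound at the backdoor state $r^{b}_t$ without comment, which is your uniform-in-$r$ reading, so the Lipschitz fallback is not needed for the stated result.
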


Proposition~\ref{proposition2} shows that backdoor-induced perturbation is reflected in the inconsistency between the perturbed transitions and the learned normal dynamics. In particular, perturbation that substantially exceeds the normal prediction error become more distinguishable from the benign transition. This motivates us to employ the transition prediction errors as the criterion for backdoor defense. The proof of Proposition~\ref{proposition2} is provided in Appendix~\ref{proof_p2}. 
\section{Method}
\subsection{Threat Model}
\textbf{Scenario and defender capability.} 
We consider a realistic deployment scenario where T2I diffusion models are obtained from potentially untrusted third-party providers. An adversary implants a hidden backdoor into the model and distributes it as a seemingly benign model, while the downstream user remains unaware of the compromise. The defender is assumed to have white-box access to the deployed model but no prior knowledge of the embedded backdoor. A limited set of benign prompts is available and utilized to model the normal diffusion dynamics.

\textbf{Defense goals.} Our defense includes two objectives: (1) Detection: distinguish the backdoor prompts from the benign ones; (2) Localization: identify the tokens that induce the backdoor behaviors. 
\subsection{The details of \textit{NDDL}}
We propose a defense framework \textit{NDDL} that views the backdoor behaviors as the deviations of the normal diffusion evolution. The overview of \textit{NDDL} is illustrated in Figure \ref{Overview}. \textit{NDDL} first constructs a compact multi-space representation of the diffusion trajectory and then learns the normal transition dynamics using only benign samples. During inference, \textit{NDDL} identifies backdoor prompts through transition prediction inconsistency, while localizing the suspicious tokens based on the anomaly-score reduction induced by low-semantic token substitution. The pseudocode can be found in Appendix~\ref{pseudocode}.  

\subsubsection{Stage I: Multi-Space Trajectory Representation}
Rather than directly modeling the raw states $x_{t} =(A_{t}, z_{t}, \epsilon_{t})$, we construct the compact representation $r_t=\phi(x_t)=(r_{t}^{A}, r_{t}^{z}, r_{t}^{\epsilon}) \in\mathbb{R}^{d}$ that summarize the structural and temporal dynamics. 

\textbf{Cross-attention representation.} For the cross-attention weight $A_{t}$, we extract four descriptors $r_{t}^{A}=\phi_{A}(A_{t})$ including attention entropy $A_{t}^{AE}$, effective rank $A_{t}^{ER}$, token importance $A_{t}^{TI}$ and head diversity $A_{t}^{HD}$. Attention entropy presents the concentration of token-wise attention distribution, effective rank captures the structural complexity of attention maps, token importance shows the relative contribution of text tokens and head diversity measures variation among different attention heads. Notably, we extract the cross-attention representation of the earliest cross-attention layer along the forward pass.
\begin{figure}[t]
    \centering
        \includegraphics[width=\textwidth]{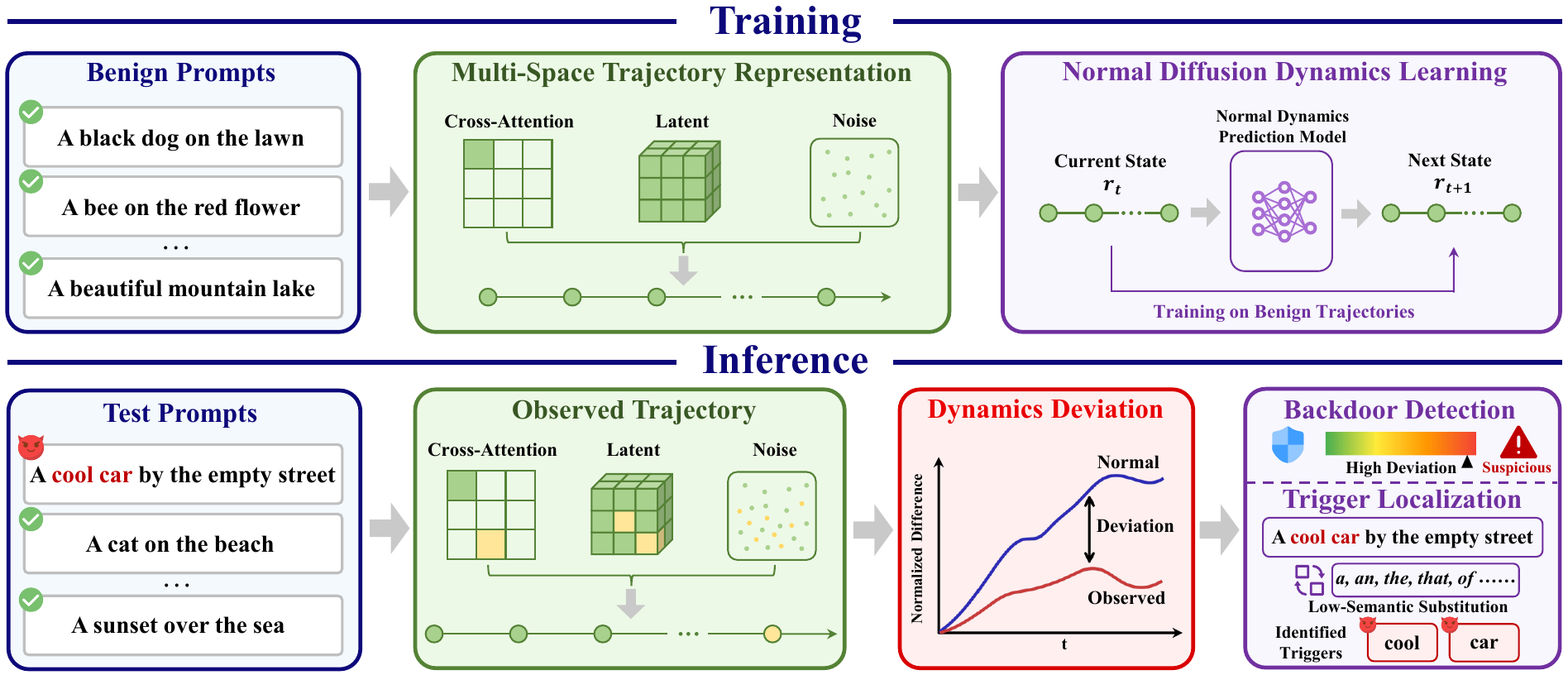}
        \caption{The overview of \textit{NDDL}. At the training phase, \textit{NDDL} first constructs a compact multi-space representation of the diffusion trajectory and then learns the normal diffusion dynamics using only benign samples. At the inference phase, \textit{NDDL} identifies backdoor prompts through dynamics deviation between observed and predicted transitions, while localizing the trigger tokens induced by low-semantic substitution.}
        \label{Overview}
\end{figure}

\textbf{Latent representation.} For the latent state $z_{t}$, we extract the descriptors $r_{t}^{z}=\phi_{z}(z_{t})$ depicting the instantaneous structure and local temporal evolution. Specially, we consider channel norm $z_{t}^{CN}$, trajectory curvature $z_{t}^{TC}$, frequency-domain energy $z_{t}^{FE}$ and temporal variation $z_{t}^{TV}$. These descriptors elucidate the geometric and spectral dynamics of the latent representation across the diffusion process.

\textbf{Noise representation.} Similarly, we construct $r_{t}^{\epsilon}=\phi_{\epsilon}(\epsilon_{t})$ utilizing channel norm $\epsilon_{t}^{CN}$, channel variance $\epsilon_{t}^{CV}$, frequency-domain energy $\epsilon_{t}^{FE}$ and temporal variation $\epsilon_{t}^{TV}$. These features characterize both the distribution properties of the predicted noises and the temporal evolution across diffusion process.

Notably, considering the heterogeneous scales of different trajectory descriptors, we apply robust normalization based on benign training statistics followed by block-wise scaling before feature concatenation. Implementation details of representation extractions and normalizations can be seen in Appendixes~\ref{ml_representations} and~\ref{Normalization}.

\subsubsection{Stage II: Normal Diffusion Dynamics Learning}
Section~\ref{theory} suggests that backdoor attack can be viewed as the deviation from the normal diffusion evolution. Thus, we consider to model the evolution process of the benign trajectories. We define the transition increment of two consecutive trajectory representations as $\Delta r_{t} = r_{t+1} - r_{t}$. Instead of directly predicting $r_{t+1}$, we train a model $G_{\theta}$ to obtain this transition increment conditioned on timestep $t$. The details of $G_{\theta}$ is presented in Appendix~\ref{network_appendix}. The prediction of the transition increment is denoted as $\Delta \hat{r}_{t} = G_{\theta}(r_{t}, t)$. Then, we can obtain the prediction of the next trajectory representation:
\begin{equation}
\hat{r}_{t+1} = r_{t} + \Delta \hat{r}_{t}.
\end{equation}
Thus, the learned approximation of the normal transition rule is:
\begin{equation}
F_{\theta}(r_{t}, t) = r_{t} + G_{\theta}(r_{t}, t).
\end{equation}
The predictor $G_{\theta}$ is trained only using trajectories from the benign prompts. Since the three representations exhibit different statistical characteristics, we employ a block weighted loss during model training:
\begin{equation}
\mathcal{L} = \lambda_A \mathcal{L}_A + \lambda_z \mathcal{L}_z + \lambda_\epsilon \mathcal{L}_\epsilon,\label{model_loss}
\end{equation}
where
\begin{equation}
\mathcal{L}_m = \frac{1}{T} \sum_{t=0}^{T-1} \left\| \Delta r_t^m - \Delta \hat{r}_t^m \right\|_2^2, \quad m \in \{A, z, \epsilon\}.
\end{equation}
$\lambda_A$, $\lambda_z$ and $\lambda_\epsilon$ are the balance weights. After training, $F_{\theta}$ serves as the approximation of the normal diffusion transition dynamics. 

\subsubsection{Stage III: Backdoor Detection}
For an unseen prompt $p$, we extract its compact trajectory and compute the transition inconsistency as:
\begin{equation}
E_t = \frac{1}{d} \| r_{t+1} - \hat{r}_{t+1} \|_2^2.
\end{equation}
To evaluate the dynamics consistency, we select a temporal interval of the denoising process and partition it into $n$ equal-length short windows $\mathcal{W} = \{W_1, W_2, \ldots, W_n\}$. This strategy prevents the transition inconsistencies from being diluted by averaging the entire diffusion trajectory. For each window $W_{i}$, we obtain:
\begin{equation}
S_i(p) = \frac{1}{|W_i|} \sum_{t \in W_i} E_t.
\end{equation}
We define the final anomaly score as $S(p) = \max_{i=1,\dots,n} S_i(p)$ and consider the prompt as suspicious when $S(p) > \lambda_{1}$. $\lambda_{1}$ can be utilized on benign validation trajectories, which is similar to the method in~\citep{NaviT2I}. Specially, to achieve backdoor-agnostic thresholding, we perform a Gaussian fitting on the prediction errors of the benign validation trajectories, i.e., $S(p_\text{benign})\sim\mathcal{N}(\mu_{\text{benign}}, \sigma_{\text{benign}}^2)$. Thus, $\lambda_{1}$ can be set as:
\begin{equation} 
\lambda_{1} = \mu_{\text{benign}} + m \cdot \sigma_{\text{benign}} . \label{threshold_set}
\end{equation}
where $m$ is a balance weight. 

\subsubsection{Stage IV: Trigger Localization}
For a suspicious prompt $p_{s}$, we further localize the tokens inducing backdoor behavior. A single predefined substitute may introduce replacement-dependent bias and interact with the embedded backdoor. Therefore, we adopt a corrected multi-substitution strategy, where multiple low-semantic words are selected based on their consistency over benign reference prompts.

Specially, we first collect an initial candidate set $\mathcal{V}_{0}$ including low-semantic words, which are presented in Appendix~\ref{appendix_neutral_tokens}. Utilizing the benign reference prompts $p\in \mathcal{P}_{b}$, we estimate the change introduced by each candidate replacement $v\in\mathcal{V}_{0}$ as:
\begin{equation}
B(v) = \mathbb{E}_{p \in \mathcal{P}_{b,i}} \left[ \left| S\left(p^{(i \to v)}\right) - S(p) \right|\right],
\end{equation}
where $p^{(i \to v)}$ represents the prompt by replacing the token at position $i$ with the word $v$. Candidates obtaining only small variations are preserved to form the corrected set $\mathcal{V}_c = \{ v \in \mathcal{V}_0 \mid B(v) \leq \tau_v \}$.

Given a prompt $p_{s} = \{w_{1}, w_{2}, \ldots, w_{n}\}$, each token $w_{i}$ is replaced by the token $v \in \mathcal{V}_{c}$. The score of the replacement is defined as:
\begin{equation}
C_i = S(p_{s}) - \operatorname{Median}_{v \in \mathcal{V}_c} S\left(p_{s}^{(i \to v)}\right).
\end{equation}
We classify $w_{i}$ as a trigger token if $C_{i} > \lambda_{2}$. The setting of $\lambda_{2}$ is the same as that of $\lambda_{1}$.

\section{Experiments}
\subsection{Experimental settings}
\textbf{Attack and defense methods.} We consider the following backdoor attack methods for T2I models: (1) BadT2I~\cite{badt2i} with one token trigger `\textbackslash u200b’ and the sentence trigger `I like this photo.’; (2) EvilEdit~\cite{eviledit} with the trigger tokens `beautiful cat’; (3) MasqLoRA~\cite{MasqLoRA} with the trigger `cool car’; (4) RickRolling~\cite{rickrolling} with the special character `o (U+043E)' as the trigger; (5) STEBA~\cite{stediff} with the trigger ‘A Object:’. Four defense methods are considered as baselines: UFID~\cite{UFID}, T2IShield~\cite{t2ishield}, NaviT2I~\cite{NaviT2I} and STEDF~\cite{stediff}.

\textbf{Dataset and models.} We utilize DiffusionDB~\cite{diffusiondb} to sample the prompts in our experiments. For each attack, we sample 1,000 benign prompts and 1,000 backdoor prompts with the triggers. We conduct main experiments on Stable Diffusion v1.5~\cite{sd1.5}, Stable Diffusion XL~\cite{sdxl}. Moreover, we also validate our method on the diffusion transformer (DiT) based Stable Diffusion v3.5~\cite{sd3.5} and Pixart-$\alpha$~\cite{pixart}.

\textbf{Evaluation metric.} For the results of backdoor detection, we calculate the detection accuracy (ACC). Meanwhile, to eliminate the impact of varying thresholds, we also adopt the area under receiver operating curve (AUROC). We evaluate trigger localization using exact trigger recovery (ETR) and AUROC, where ETR measures the proportion of backdoor prompts that all trigger tokens are correctly identified.
\subsection{Defense Results}
We comprehensively evaluate \textit{NDDL} across three critical dimensions: backdoor detection, trigger localization, and generalization to DiT architectures. Our results demonstrate that learning normal diffusion dynamics provides a unified, attack-agnostic defense mechanism that consistently outperforms existing methods.
\begin{table}[htbp]
  \centering
  \caption{Evaluation results of different detection methods against various attacks on Stable Diffusion v1.5. Bold indicates the best performance, and underlined denotes the second best.}
  \label{detection_results_sd1.5}
  \resizebox{\textwidth}{!}{%
    \begin{tabular}{m{2.5cm}cccccccccc}
      \toprule
      \multirow{2}{*}{\textbf{Method}} & \multicolumn{2}{c}{\textbf{BadT2I}} & \multicolumn{2}{c}{\textbf{EvilEdit}} & \multicolumn{2}{c}{\textbf{MasqLoRA}} & \multicolumn{2}{c}{\textbf{Rickrolling}} & \multicolumn{2}{c}{\textbf{STEBA}} \\
      \cmidrule(lr){2-3} \cmidrule(lr){4-5} \cmidrule(lr){6-7} \cmidrule(lr){8-9} \cmidrule(lr){10-11}
       & ACC $\uparrow$ & AUROC $\uparrow$ & ACC $\uparrow$ & AUROC $\uparrow$ & ACC $\uparrow$ & AUROC $\uparrow$ & ACC $\uparrow$ & AUROC $\uparrow$ & ACC $\uparrow$ & AUROC $\uparrow$ \\
      \midrule
      UFID      & 71.5 & 71.4 & 62.5 & 62.2 & 68.1 & 68.2 & 62   & 61.7 & 52.7 & 51.9 \\
      T2IShield  & 84.8 & 84.6 & 85.3 & 85.8 & 81.3 & 81.9 & 85.5 & 86.1 & 73.3 & 73.5 \\
      NaviT2I  & 96.3 & 96.9 & 94.8 & 95.5 & 91.3 & 91.8 & 88   & 89.1 & 79.8 & 80.4 \\
      STEDF     & \textbf{99.3} & \textbf{99.4} & \underline{98}   & \underline{98.1} & \underline{97.5} & \underline{97.7} & \underline{96.2} & \underline{96.6} & \underline{89.3} & \underline{89.6} \\
      \textbf{\textit{NDDL} (Ours)} & \underline{99} & \underline{99.2} & \textbf{98.5} & \textbf{98.6} & \textbf{98.2} & \textbf{98.3} & \textbf{98.2} & \textbf{98.1} & \textbf{96.8} & \textbf{97} \\
      \bottomrule
    \end{tabular}%
  }
\end{table}

\begin{table}[htbp]
  \footnotesize
  \centering
  \caption{Evaluation results of trigger localization using different methods.}
  \label{localization_results_sd1.5}
  \resizebox{\textwidth}{!}{%
    \begin{tabular}{m{2.5cm}cccccccc}
      \toprule
      \multirow{2}{*}{\textbf{Method}} & \multicolumn{2}{c}{\textbf{One-token}} & \multicolumn{2}{c}{\textbf{Multi-token}} & \multicolumn{2}{c}{\textbf{Special-character}} & \multicolumn{2}{c}{\textbf{Sentence}} \\
      \cmidrule(lr){2-3} \cmidrule(lr){4-5} \cmidrule(lr){6-7} \cmidrule(lr){8-9}
       & ETR $\uparrow$& AUROC $\uparrow$ & ETR $\uparrow$ & AUROC $\uparrow$ & ETR $\uparrow$ & AUROC $\uparrow$ & ETR $\uparrow$ & AUROC $\uparrow$ \\
      \midrule
      T2IShield & 86.8 & 87.1 & 81.3 & 80.5 & 89.2 & 88.8 & 73.8 & 75.5 \\
      NaviT2I  & \underline{96.8} & \underline{96.3} & \underline{96.8} & \underline{96.5} & \underline{93.3} & \underline{93.5} & \underline{81.5} & \underline{81.8} \\
      \textbf{\textit{NDDL} (Ours)} & \textbf{98.8} & \textbf{99.1} & \textbf{98.5} & \textbf{98.9} & \textbf{96.0} & \textbf{95.6} & \textbf{89.2} & \textbf{88.8} \\
      \bottomrule
    \end{tabular}%
  }
\end{table}

\textbf{Evaluation results on backdoor detection.} Table~\ref{detection_results_sd1.5} reports the performance of different defense methods in distinguishing benign and backdoor prompts on Stable Diffusion v1.5. Specifically, \textit{NDDL} demonstrates excellent performance, maintaining high ACC and AUROC across diverse backdoor settings. \textit{NDDL} achieves the best results on EvilEdit, MasqLoRA, Rickrolling, and STEBA, while remaining highly competitive on BadT2I. The improvement is particularly pronounced on STEBA, where \textit{NDDL} increases ACC from 89.3$\%$ to 96.8$\%$ and AUROC from 89.6$\%$ to 97.0$\%$ compared with the best baseline. These results demonstrate that the deviations from normal diffusion dynamics provide the stable and attack-agnostic indicators for identifying the benign and backdoor samples. Detection results on Stable Diffusion XL are reported in Appendix~\ref{detection_appendix}, where \textit{NDDL} still presents the superior performance. 

\textbf{Evaluation results on trigger localization.} To comprehensively evaluate trigger localization for diverse trigger forms, we implement one-token and sentence-level triggers with BadT2I, multi-token triggers with MasqLoRA and special-character triggers using Rickrolling. As shown in Table~\ref{localization_results_sd1.5}, \textit{NDDL} outperforms existing localization methods for all four trigger forms. \textit{NDDL} obtains high localization accuracy for both one-token and multiple-token triggers, while preserving robust performance on special-character triggers. Notably, \textit{NDDL} demonstrates its greatest superiority on the more challenging sentence-level triggers, improving ETR and AUROC by 7.7 and 7.0 over NaviT2I. More results on different diffusion models are provided in Appendix~\ref{localization_appendix}. Overall, these results demonstrate that \textit{NDDL} can reliably localize triggers with diverse trigger forms, including long and structurally complex triggers.

\begin{minipage}[t]{0.48\textwidth}
\textbf{Evaluation results on DiT structure model.} Most existing backdoor attack and defense methods for T2I models focus on U-Net based structure. To demonstrate the generalizability of \textit{NDDL}, we adopt Rickrolling on Stable Diffusion v3.5, which is based on DiT structure. As shown in Table~\ref{defense_results_sd3.5}, \textit{NDDL} still presents the best defense performance, demonstrating its effectiveness and generalizability beyond U-Net based diffusion models. The more similar results on Pixart-$\alpha$ also based on DiT structure are shown in Appendix~\ref{pixart_results_appendix}. 
\end{minipage}%
\hfill
\begin{minipage}[t]{0.48\textwidth}
\centering
\captionof{table}{Evaluation results of defense methods on Stable Diffusion v3.5, where UFID lacks the capability for trigger localizations.}
\label{defense_results_sd3.5}
\small 
\setlength\tabcolsep{3.5pt} 
\begin{tabular}{lcccc}
  \toprule
  \multirow{2}{*}{\textbf{Method}} & \multicolumn{2}{c}{\textbf{Detection}} & \multicolumn{2}{c}{\textbf{Localization}} \\
  \cmidrule(lr){2-3} \cmidrule(lr){4-5}
   & ACC $\uparrow$ & AUROC $\uparrow$ & ETR $\uparrow$ & AUROC $\uparrow$ \\
  \midrule
  UFID      & 42.2 & 40.1 & -- & -- \\
  NaviT2I   & \underline{83.8} & \underline{82.5} & \underline{75.5} & \underline{74.3} \\
  \textbf{\textit{NDDL}} & \textbf{91.2} & \textbf{92.7} & \textbf{87.5} & \textbf{86.9} \\
  \bottomrule
\end{tabular}
\end{minipage}

\subsection{Ablation Study}
We conduct ablation studies on Stable Diffusion v1.5 using BadT2I as the representative backdoor attack. Unless otherwise specified, all variants are evaluated under the same experimental setting. 

\begin{minipage}[t]{0.48\textwidth}
\textbf{Effect of multi-space representations.} Table~\ref{effect_representation} systematically investigates the impact of varying diffusion representation spaces. While single-space representations exhibit constrained detection capacity due to their partial view of the data manifold, combining multiple representations substantially boosts performance. The full representation setting obtains the best results, indicating that the three diffusion spaces offer mutually complementary information. Specifically, while one space may predominantly capture semantic inconsistencies, others might reveal structural or noise-level anomalies, thereby enabling a more robust detection of backdoor-induced transition deviations.
\end{minipage}
\hfill
\begin{minipage}[t]{0.48\textwidth}
  \centering
  \captionof{table}{Ablation study results of different model variants.} 
  \label{effect_representation}
  \begin{tabular}{lcc}
    \toprule
    \textbf{Variant} & \textbf{ACC $\uparrow$} & \textbf{AUROC $\uparrow$} \\
    \midrule
    Attention only & 63.5 & 62.3 \\
    Latent only & 59.5 & 59.5 \\
    Noise only & 58.7 & 62.9 \\
    Attention + Noise & 84.5 & 86.9 \\
    Latent + Noise & 73.6 & 70.9 \\
    Attention + Latent  & 89.8 & 88.8 \\
    Full  & 99 & 99.2 \\
    \bottomrule
  \end{tabular}
\end{minipage}

\textbf{Effect of different sampling methods.} We evaluate the performance of \textit{NDDL} utilizing different sampling methods. In Table~\ref{tab:sample_type_comparison}, we test the four representative methods. Results indicates that \textit{NDDL} presents the consistent defense performance for the evaluated samplers, demonstrating its robustness and generality. 

\begin{figure}[t]
  \begin{minipage}[c]{0.48\textwidth}
    \centering
    \captionof{table}{Performance comparison of different sampling types.} 
    \label{tab:sample_type_comparison}
    \begin{tabular}{lcc}
      \toprule
      \textbf{Sample type} & \textbf{ACC $\uparrow$} & \textbf{AUROC $\uparrow$} \\
      \midrule
      DDIM & 99.0 & 99.2 \\
      DDPM & 97.7 & 97 \\
      DPM & 98.2 & 98.6 \\
      PLMS & 98.6 & 98.5 \\
      \bottomrule
    \end{tabular}
  \end{minipage}
  \hfill
  \begin{minipage}[c]{0.48\textwidth}
    \centering
    \includegraphics[width=0.55\linewidth]{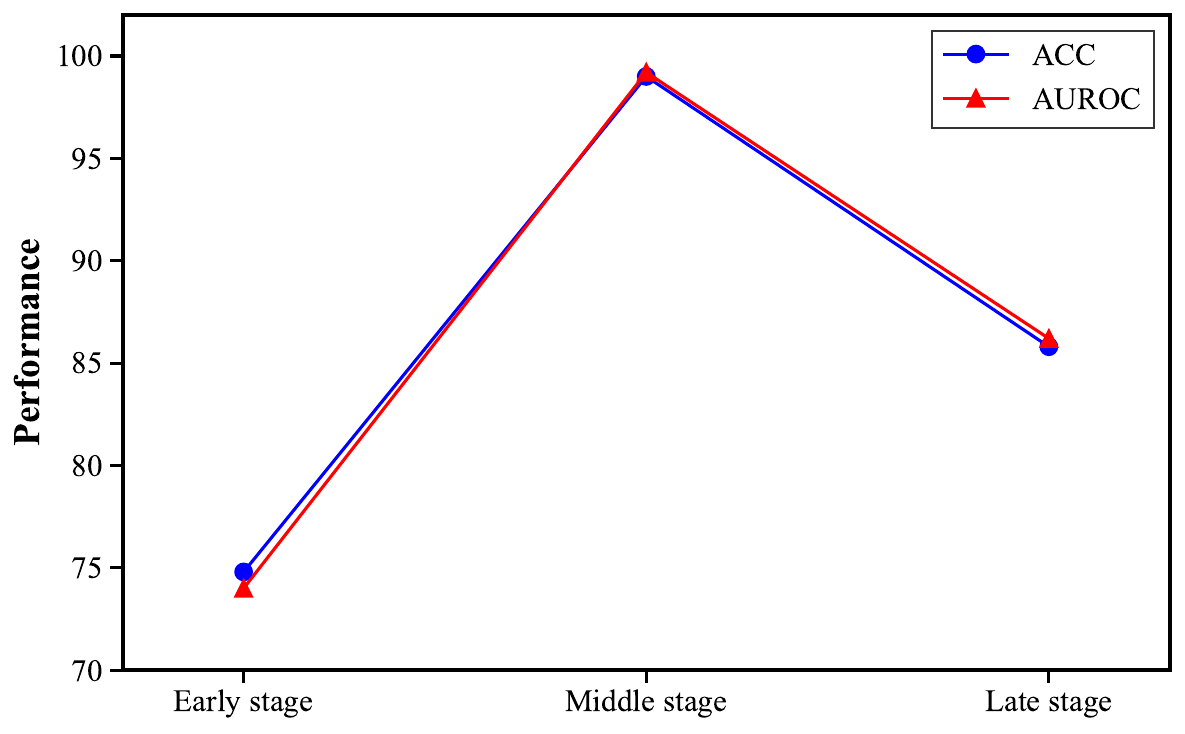}
    \captionof{figure}{Performance comparison of different denoising stages.}
    \label{effect_stages}
  \end{minipage}
\end{figure}

\textbf{Effect of denoising stages.} We investigate the effect of different denoising stages for the detection results. For the 50 sampling steps, we divide them into three stages: early stage (1-15 steps), middle stage (16-30 steps) and late stage (31-50 steps). Moreover, the window length $K$ is set as 5. As can be seen from Figure~\ref{effect_stages}, the middle stage presents the best performance, followed by the late stage, while the early stage performs the worst. The results indicate that the dynamics deviations induced by backdoor are not equally discriminative over the whole denoising process. The weak performance in the early stage is likely related to the insufficient exhibition of trigger effects. The large transition variations of benign trajectories in the late denoising stage may obscure backdoor-induced deviations to some extent, leading to slightly degraded detection performance. Overall, these results suggest that the middle denoising stage provide the most distinguishable dynamics for the detection performance.

\begin{wrapfigure}{r}{0.5\textwidth} 
    \centering
    \begin{subfigure}{0.24\columnwidth}
        \centering
        \includegraphics[width=\textwidth]{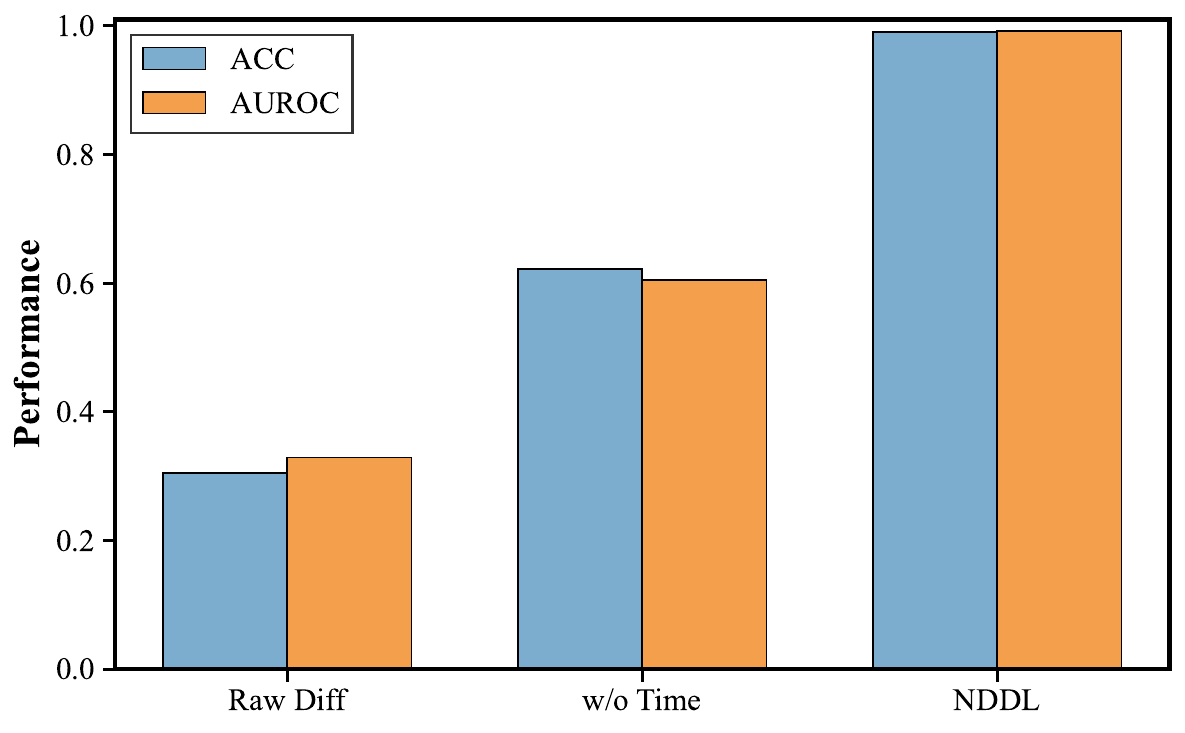}
        \caption{Modeling strategy}
        \label{ab:a}
    \end{subfigure}
    \hfill 
    \begin{subfigure}{0.24\columnwidth}
        \centering
        \includegraphics[width=\textwidth]{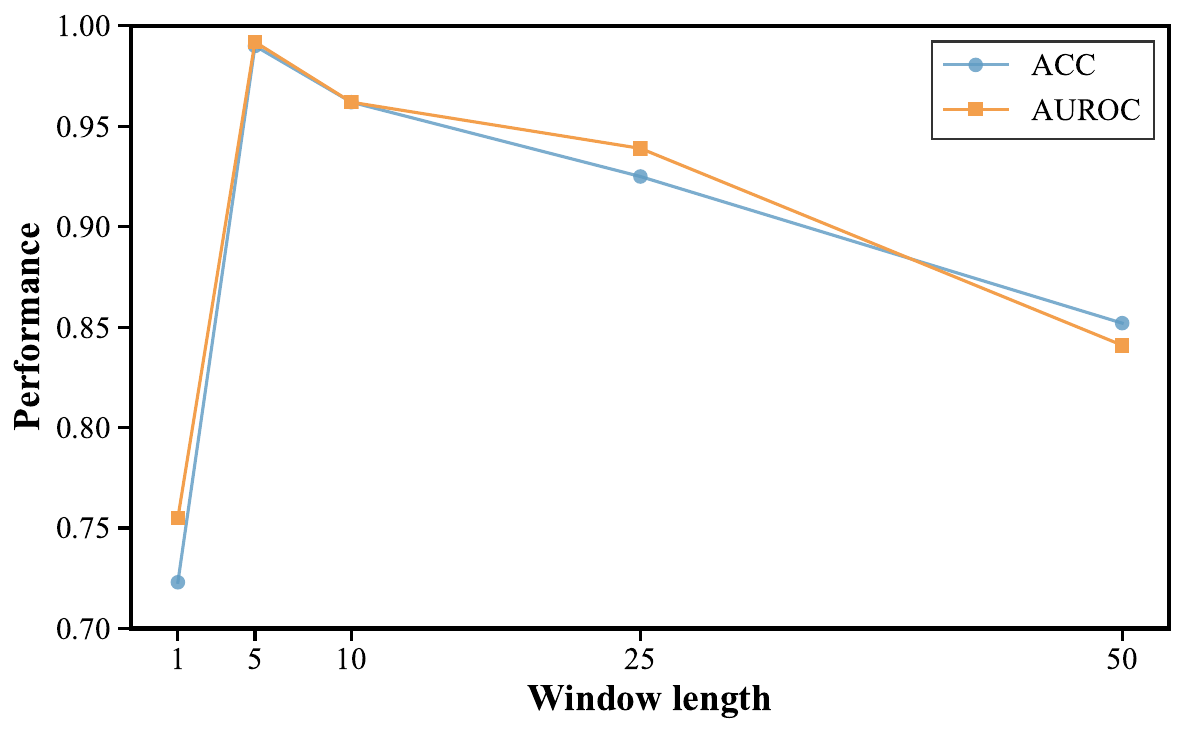}
        \caption{Window length}
        \label{ab:b}
    \end{subfigure}
    \caption{The ablation study of normal dynamics modeling and window length.}
    \label{ablation}
\end{wrapfigure}

\noindent \textbf{Effect of normal dynamics modeling and window length}: We investigate the contribution of normal dynamics modeling through two variants: (1) Raw Diff: directly using adjacent-step representation differences without learning a transition model; (2) w/o Time: retaining the dynamics predictor but removes timestep conditioning. As shown in Figure~\ref{ab:a}, directly using raw transition differences results in the worst performance. Learning normal dynamics improves defense performance, especially when timestep conditioning is incorporated.
We also study the effect of window length by varying $K$, ranging from single step to full trajectory. In Figure~~\ref{ab:b}, utilizing the single step shows limited detection performance, while aggregating residuals of short windows improves the results. However, as the window becomes longer, performance gradually decreases. These results suggest that short-window aggregation better preserves stage-specific transition inconsistencies, while overly long windows dilute the anomalies and reduce the detection performance.

\section{Conclusion}
In this work, we study backdoor defense for T2I diffusion models from a transition-dynamics perspective. Rather than relying on attack-specific abnormal indicators, we view backdoor attacks as the deviations from the normal evolution of diffusion trajectories. Our empirical analysis shows that benign trajectories exhibit structured and timestep-dependent transition regularities, while backdoor attack induces deviations from such normal dynamics. Based on this observation, we propose \textit{NDDL}, a novel backdoor defense framework that learns normal diffusion transitions from benign trajectories and detects backdoor prompts through dynamics inconsistency. \textit{NDDL} further enables trigger localization without any prior knowledge of the embedded backdoor by performing substitution with low-semantic words. Extensive experiments for diverse backdoor attacks demonstrate the effectiveness and generalizability of our proposed \textit{NDDL}.

\section{AI Use Statement}
In this work, we used generative AI tools (specifically ChatGPT) for language editing and polishing to improve the readability and grammatical accuracy of the manuscript. We have not used generative AI tools for generating research ideas, conducting data analysis, or writing original technical content, and AI-assisted figure generation or code synthesis are not applicable to this work. Additionally, we used generative AI tools for refining sentence structure and word choice during the revision process. We have reviewed all AI-assisted work. Specifically, we manually verified every AI-suggested modification against our original draft to ensure that no scientific meaning was altered, hallucinated, or misrepresented; all edits were strictly limited to linguistic improvements and were approved by all authors. We take responsibility for the final content of this work, including text, claims or artifacts produced with the aid of generative AI.

\section{Ethics Statement}
This work adheres to the ICLR Code of Ethics. In this study, no human subjects or animal experimentation was involved. All datasets used, were sourced in compliance with relevant usage guidelines, ensuring no violation of privacy. We have taken care to avoid any biases or discriminatory
outcomes in our research process. No personally identifiable information was used, and no experiments were conducted that could raise privacy or security concerns. We are committed to maintaining transparency and integrity throughout the research process.

\section{Reproducibility Statement}
We have made every effort to ensure that the results presented in this paper are reproducible. All code and datasets have been made publicly available in an anonymous repository to facilitate replication and verification. The experimental setup, including training steps, model configurations, and hardware details, is described in detail in the paper. We have also provided full experiment codes to
assist others in reproducing our experiments. Additionally, all datasets in this paper are publicly available, ensuring consistent and reproducible evaluation results. We believe these measures will enable other researchers to reproduce our work and further advance the field.

\bibliography{iclr2027_conference}
\bibliographystyle{iclr2027_conference}

\appendix
\section{The Details of Observation Results on Diffusion Dynamics}
\subsection{Observation I}\label{ob1_sub_appendix}


For the benign prompts, we measure the transition differences between the consecutive denoising steps: $d_t^{A}=D_{\mathrm{JS}}(A_t,A_{t+1}), d_t^{z}=\mathrm{MSE}(z_t,z_{t+1}), d_t^{\epsilon}=\mathrm{MSE}(\epsilon_t,\epsilon_{t+1})$, where $D_{\mathrm{JS}}$ is JS divergence and $\mathrm{MSE}$ represents mean squared error. To study the consistency of benign diffusion dynamics, we randomly sample 10 benign prompts and analyze the temporal evolution of each representation over the denoising process. Specially, for each prompt, we track the consecutive transition differences in cross-attention weights, latent and noise spaces. As shown in Figure~\ref{ob1_appendix}, despite semantic differences for the sampled prompts, the overall evolution remains highly consistent in each representation. Then, for each representation, we compute the transition differences by averaging 1000 benign trajectories. As shown in Figure~\ref{ob1:a}, the averaged curves present clear timestep-dependent evolution patterns for the three representations. To quantify this consistency, we compute the Pearson correlation and cosine similarity between each individual transition trajectory and the corresponding mean results. High similarities in Figure~\ref{ob1:b} indicate that, despite substantial semantic diversity, the benign prompts follow a common temporal evolution pattern. In addition, we evaluate the results of different models. As shown in Figures~\ref{ob1_sdxl},~\ref{ob1_pixart} and ~\ref{ob1_sd3.5}, although different models are employed, similar evolution patterns can be observed. Overall, these results lead to our first empirical observation: \textbf{Benign diffusion trajectories exhibit structured and time-dependent transition dynamics.}

\begin{figure}[htbp]
    \centering
    \begin{subfigure}[b]{0.32\textwidth}
        \centering
        \includegraphics[width=\textwidth]{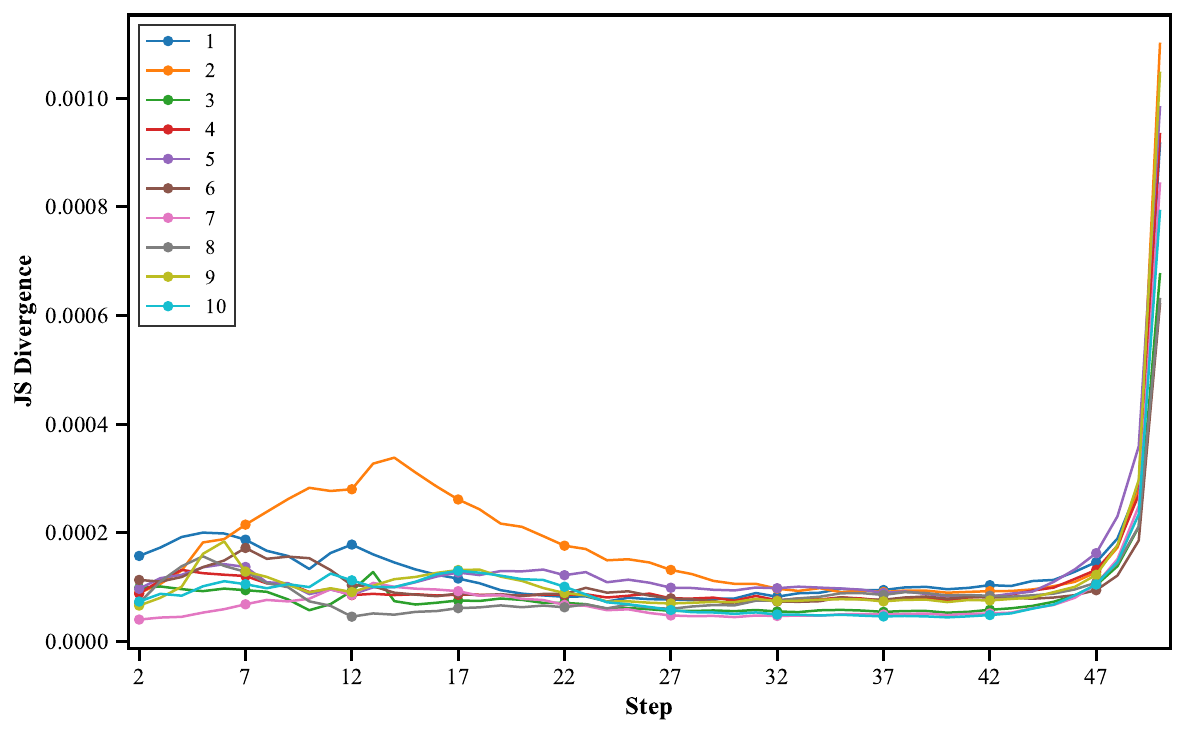} 
        \caption{Cross-attention weight}
        \label{ob2:a}
    \end{subfigure}
    \hfill 
    \begin{subfigure}[b]{0.32\textwidth}
        \centering
        \includegraphics[width=\textwidth]{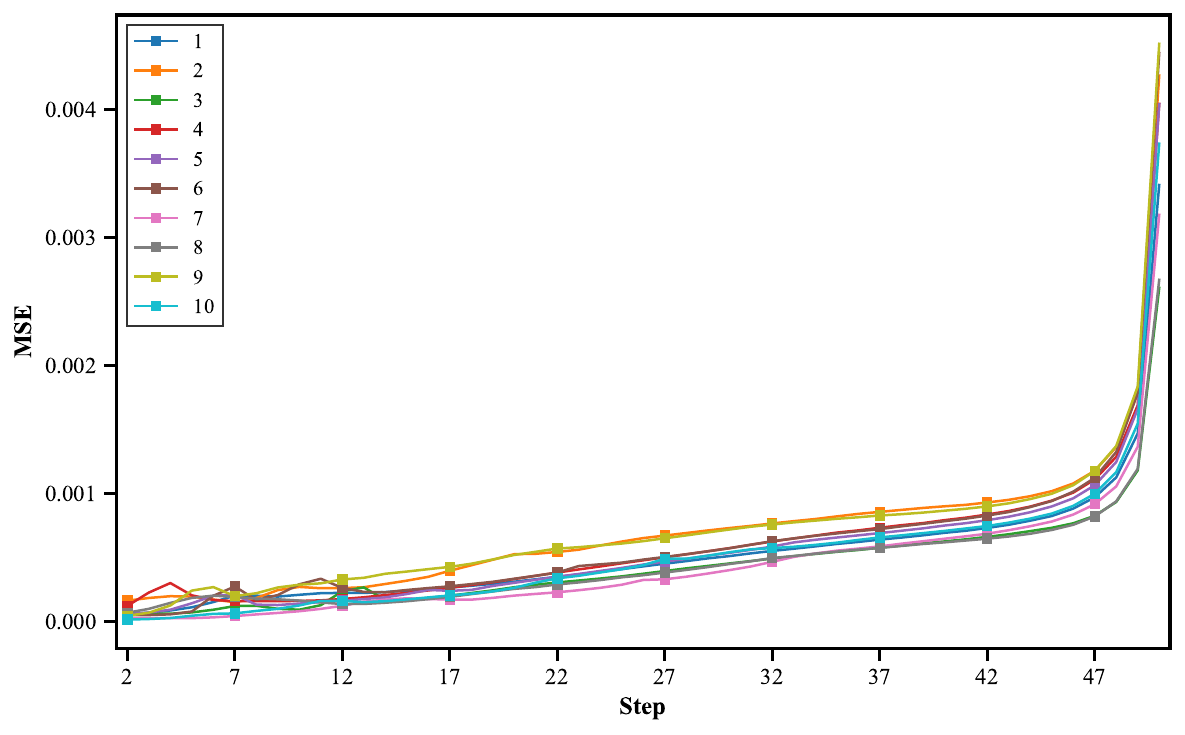}
        \caption{Latent}
        \label{ob2:b}
    \end{subfigure}
    \hfill
    \begin{subfigure}[b]{0.32\textwidth}
        \centering
        \includegraphics[width=\textwidth]{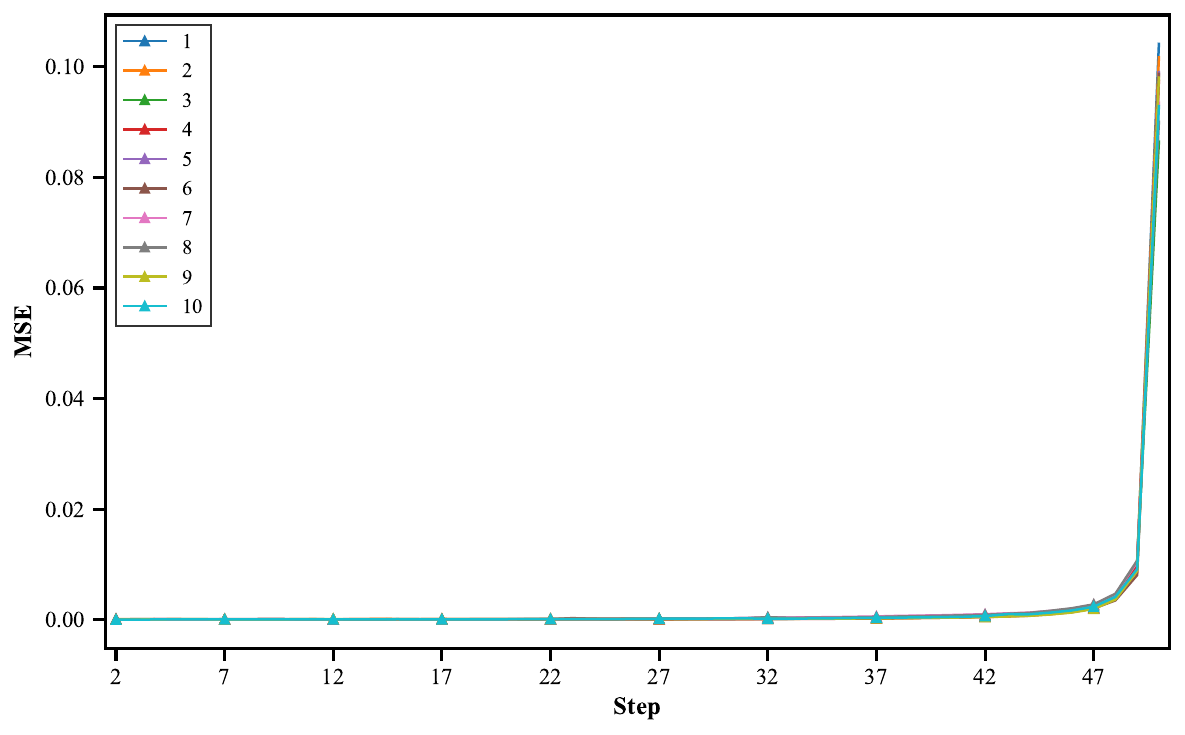}
        \caption{Noise}
        \label{ob2:c}
    \end{subfigure}
    \caption{The evolution patterns of transition differences in Stable Diffusion v1.5 based on a random sample of 10 benign prompts.}
    \label{ob1_appendix}
\end{figure}

\begin{figure}[htbp]
    \centering
    \begin{subfigure}{0.45\columnwidth}
        \centering
        \includegraphics[width=\textwidth]{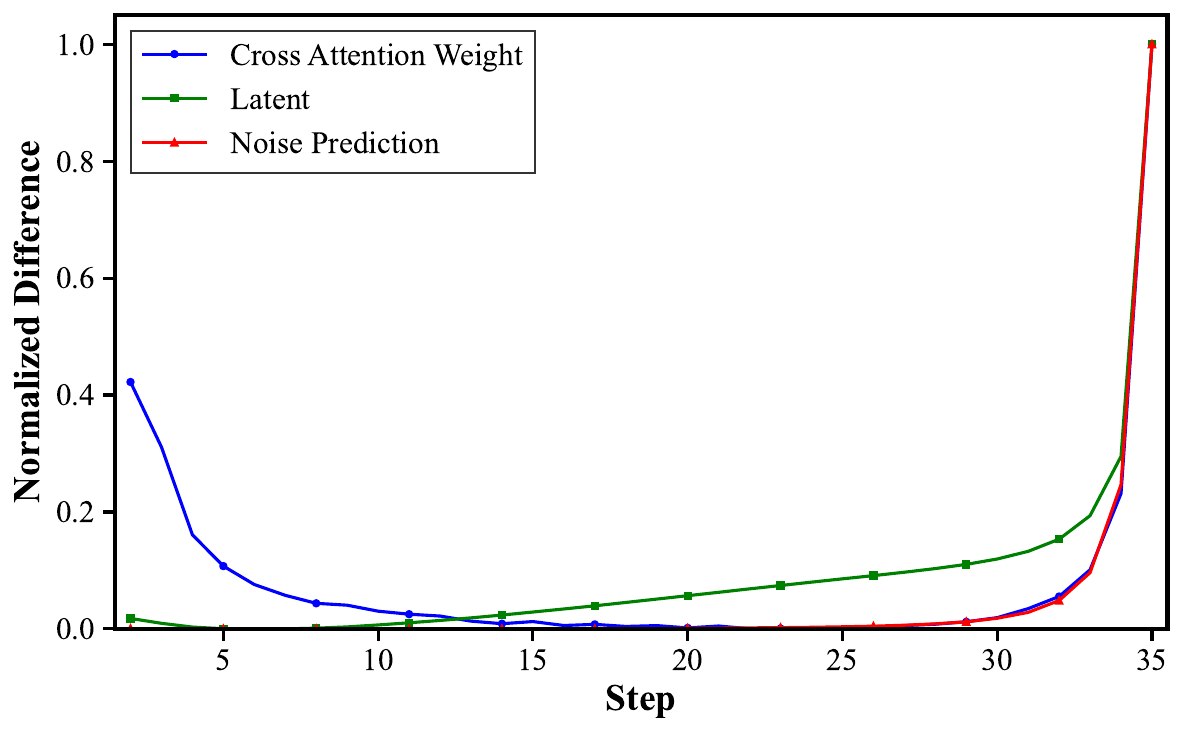}
        \caption{Averaged curves}
    \end{subfigure}
    \hfill 
    \begin{subfigure}{0.45\columnwidth}
        \centering
        \includegraphics[width=\textwidth]{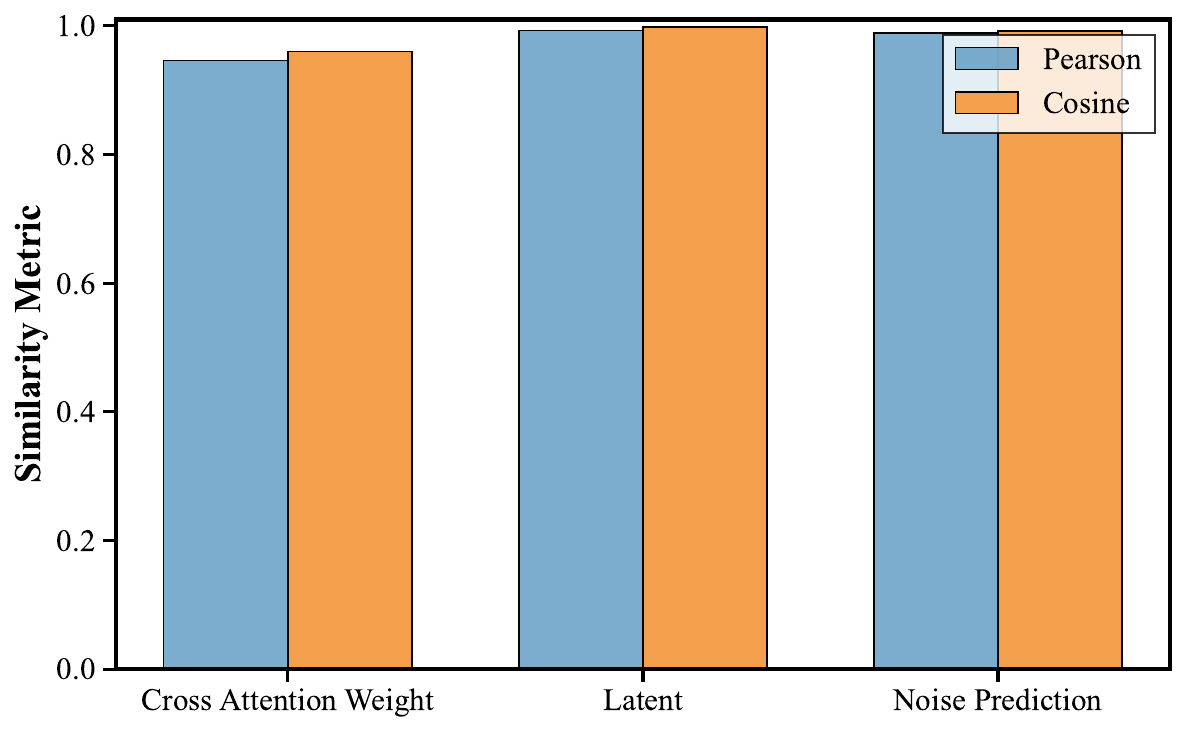}
        \caption{Similarity results}
    \end{subfigure}
    \caption{Timestep-dependent evolution patterns of transition differences for the three representations of Stable Diffusion XL.}
    \label{ob1_sdxl}
\end{figure}

\begin{figure}[htbp]
    \centering
    \begin{subfigure}{0.45\columnwidth}
        \centering
        \includegraphics[width=\textwidth]{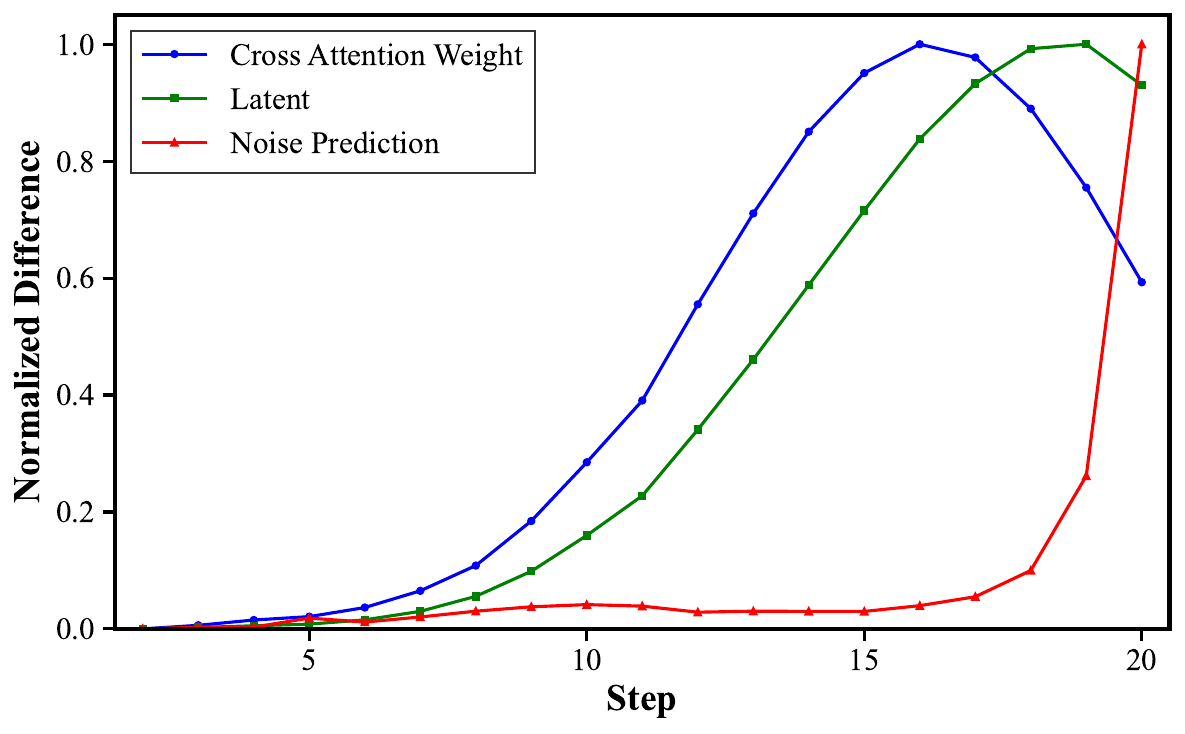}
        \caption{Averaged curves}
    \end{subfigure}
    \hfill 
    \begin{subfigure}{0.45\columnwidth}
        \centering
        \includegraphics[width=\textwidth]{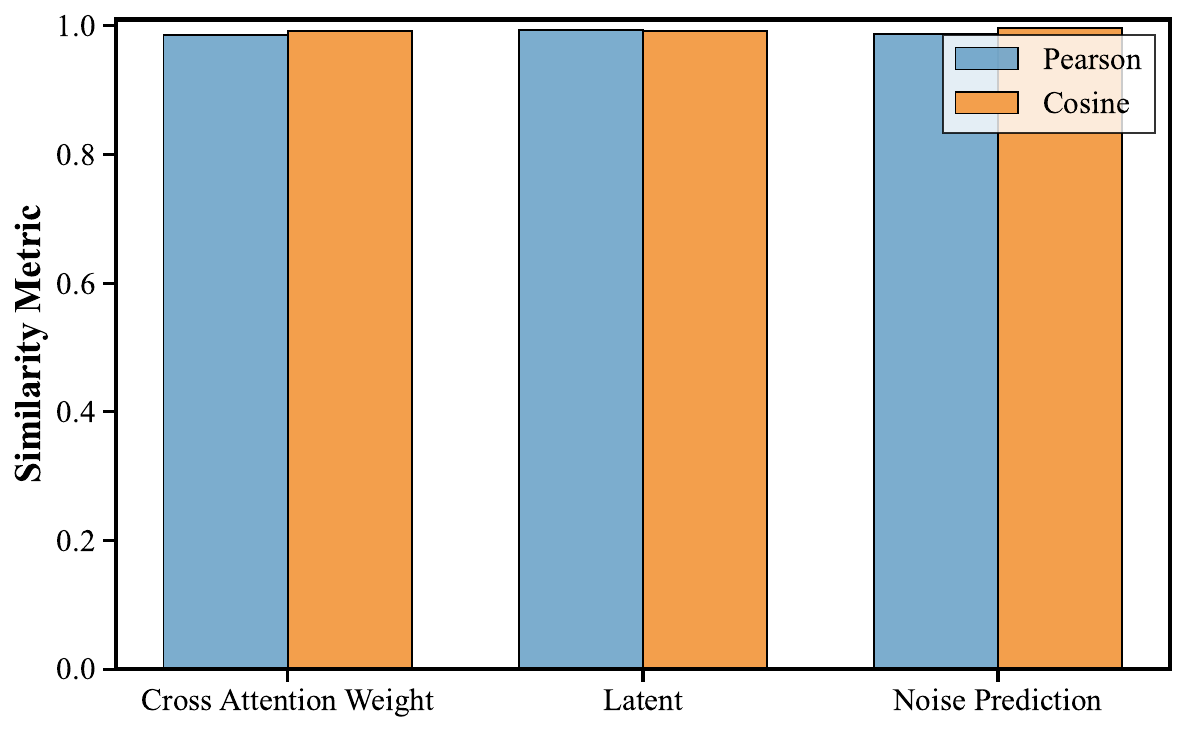}
        \caption{Similarity results}
    \end{subfigure}
    \caption{Timestep-dependent evolution patterns of transition differences for the three representations of Pixart-$\alpha$.}
    \label{ob1_pixart}
\end{figure}
\begin{figure}[htbp]
    \centering
    \begin{subfigure}{0.45\columnwidth}
        \centering
        \includegraphics[width=\textwidth]{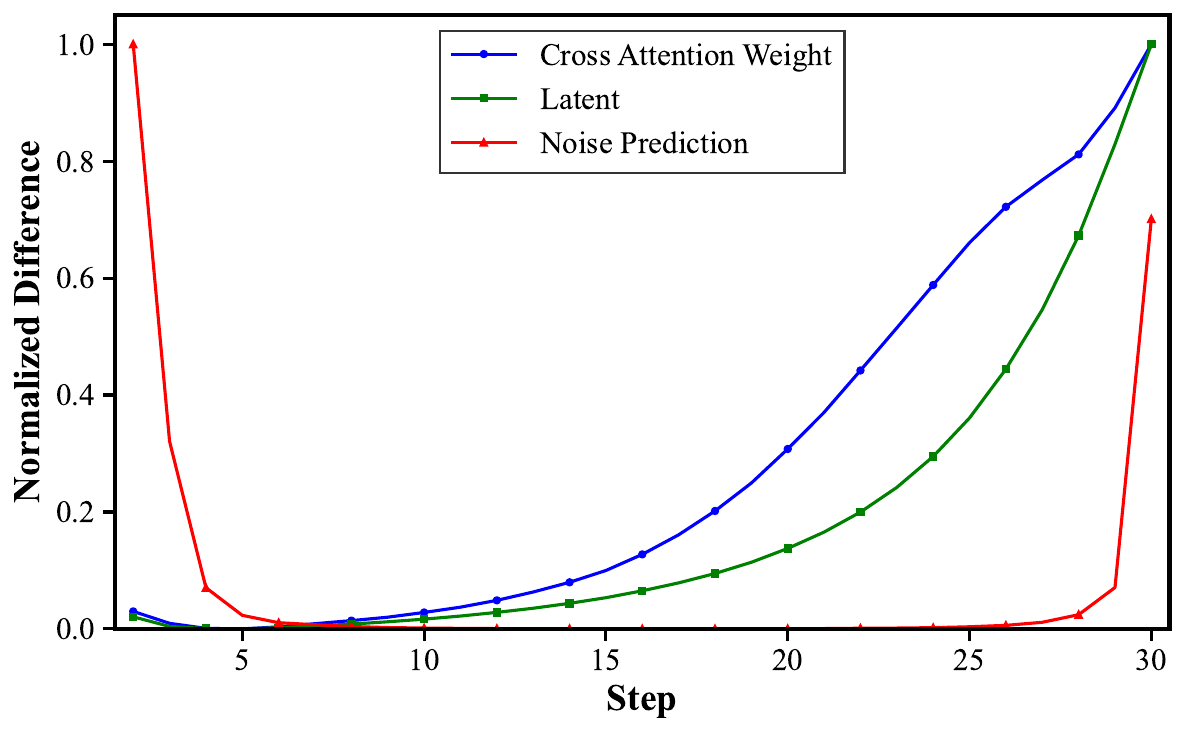}
        \caption{Averaged curves}
    \end{subfigure}
    \hfill 
    \begin{subfigure}{0.45\columnwidth}
        \centering
        \includegraphics[width=\textwidth]{fig/observation/ob1/sd_1.5_sim.pdf}
        \caption{Similarity results}
    \end{subfigure}
    \caption{Timestep-dependent evolution patterns of transition differences for the three representations of Stable Diffusion v3.5.}
    \label{ob1_sd3.5}
\end{figure}

\subsection{Observation II}\label{ob2_sub_appendix}
We further investigate how backdoor triggers affect the evolution of diffusion trajectories. We compare the representation trajectories of benign and backdoor prompts, calculating their discrepancy at each denoising timestep: $g_t^{A}=D_{\mathrm{JS}}(A_t^{b}, A_t), g_t^{z}=\mathrm{MSE}(z_t^{b}, z_t), g_t^{\epsilon}=\mathrm{MSE}(\epsilon_t^{b}, \epsilon_t)$. For different backdoor attack methods, we randomly sample 500 prompts and their corresponding trigger-injected ones. As shown in Figure~\ref{ob2}, the backdoor-related trajectories exhibit obvious discrepancies from their benign ones for the three representations. Although the temporal patterns of these discrepancies vary in different attack methods, the separation between benign and backdoor trajectories remains consistently observable over the diffusion process. To further validate Observation II, we provide additional results of different backdoor attacks. Specifically, we compare the diffusion trajectories of benign prompts with their corresponding backdoor ones in the cross-attention, latent and noise spaces. As shown in Figures~\ref{ob2_badt2i_app},~\ref{ob2_eviledit_app},~\ref{ob2_masqlora_app},~\ref{ob2_rickrolling_app} and~\ref{ob2_steba_app}, obvious discrepancies between benign and backdoor trajectories can be consistently observed in different attacks and representation spaces. We also provide quantitative similarity analysis. As can be seen, backdoor prompts of different attack methods exhibit highly similar temporal patterns of trajectory discrepancy. These results lead to our second empirical observation: \textbf{Backdoor attacks induce distinct deviations from benign diffusion trajectories.}
\begin{figure}[htbp]
    \centering
    \begin{subfigure}{0.45\columnwidth}
        \centering
        \includegraphics[width=\textwidth]{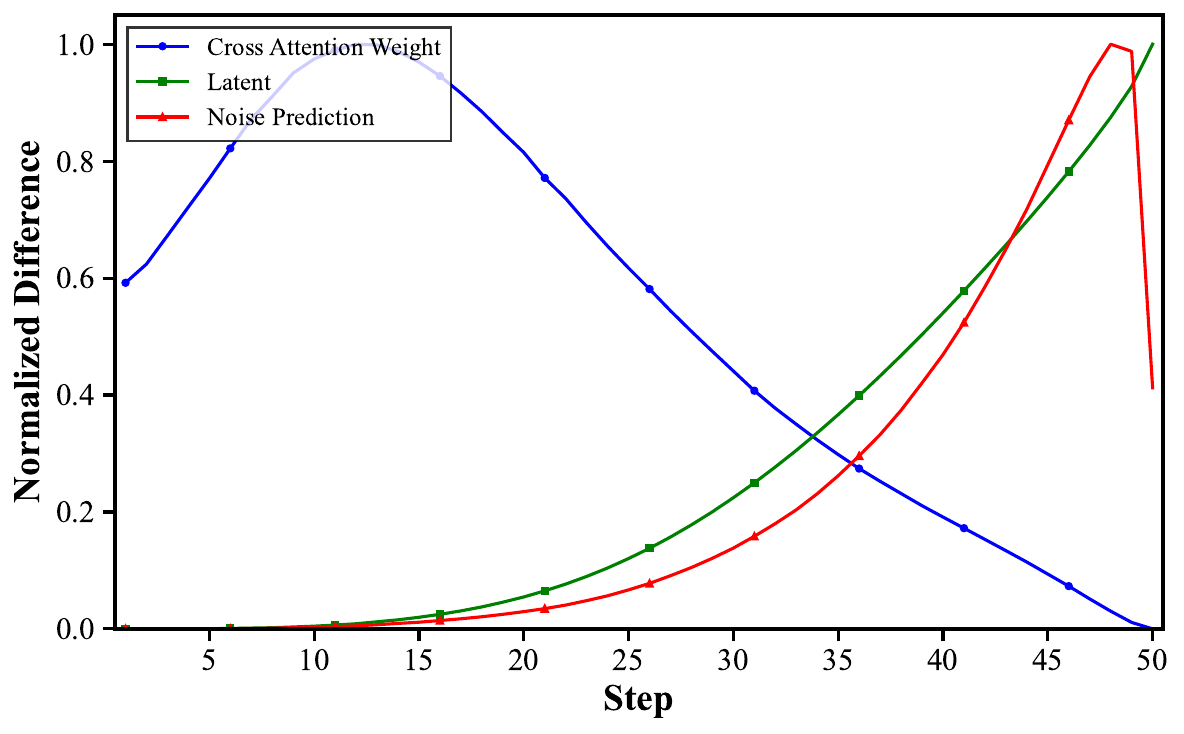}
        \caption{Averaged discrepancy curves of BadT2I (Object)}
    \end{subfigure}
    \hfill 
    \begin{subfigure}{0.45\columnwidth}
        \centering
        \includegraphics[width=\textwidth]{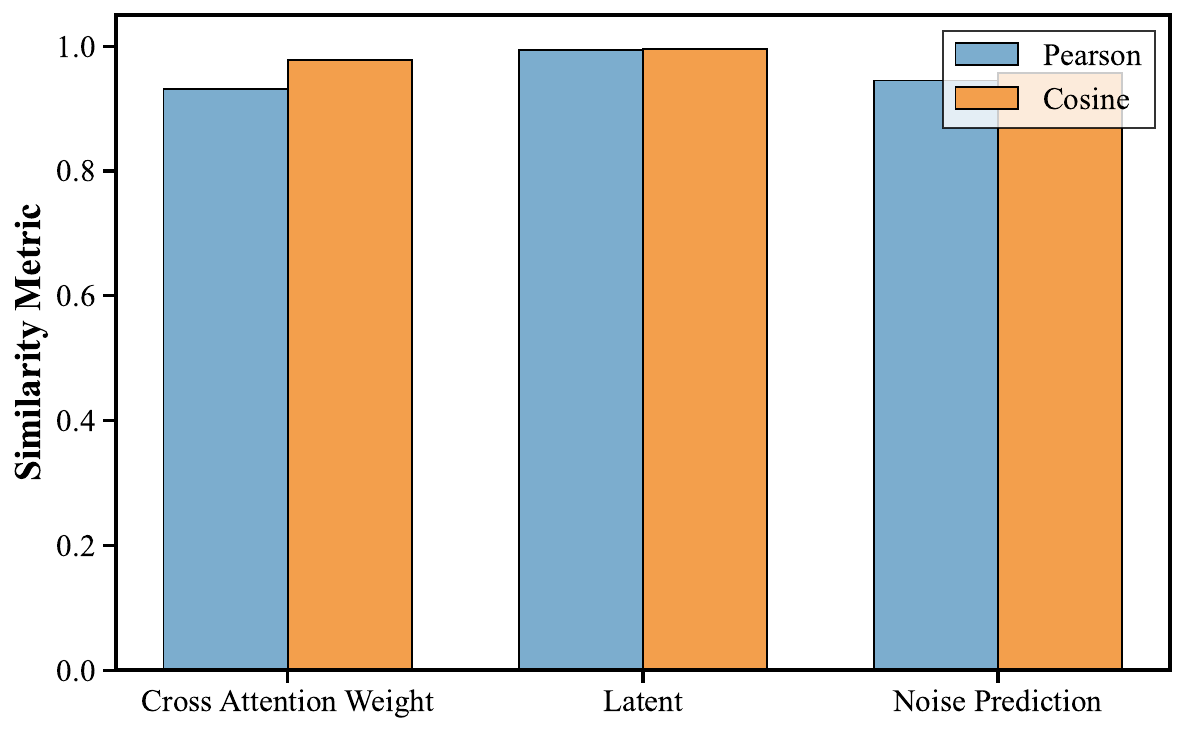}
        \caption{Similarity results of BadT2I (Object)}
    \end{subfigure}
    \begin{subfigure}{0.45\columnwidth}
        \centering
        \includegraphics[width=\textwidth]{fig/observation/ob2/badt2i_pix.pdf}
        \caption{Averaged discrepancy curves of BadT2I (Pixel)}
    \end{subfigure}
    \hfill 
    \begin{subfigure}{0.45\columnwidth}
        \centering
        \includegraphics[width=\textwidth]{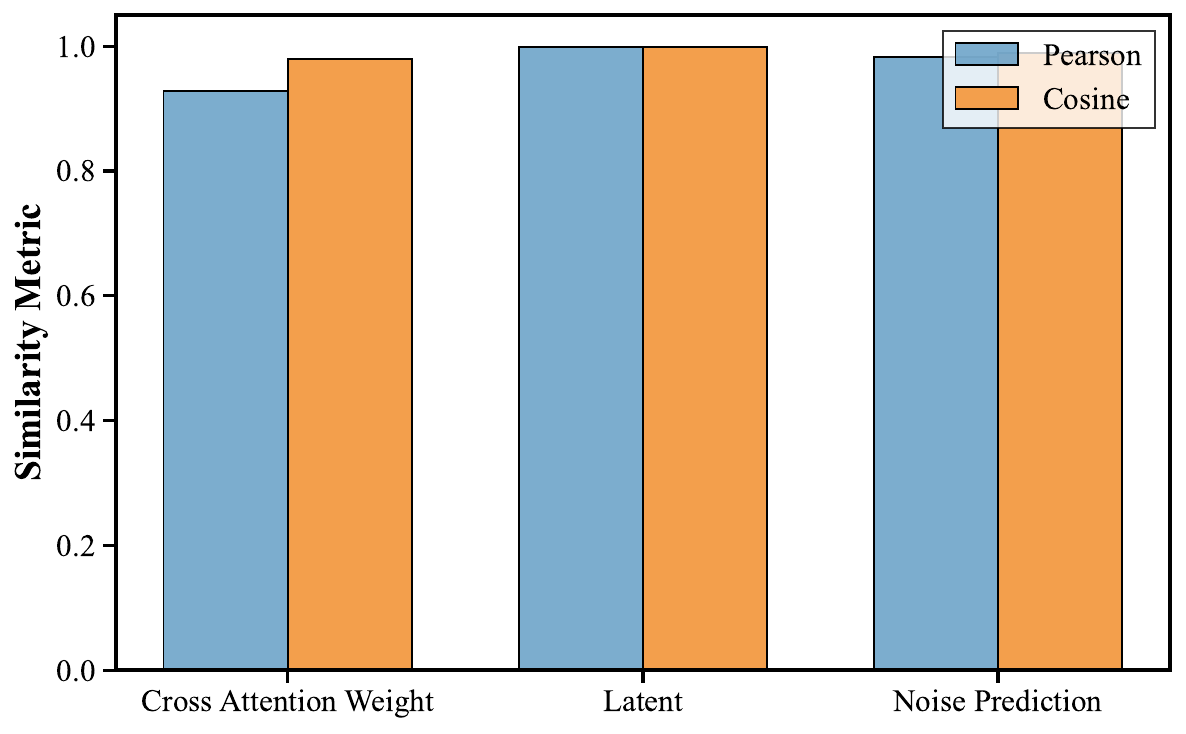}
        \caption{Similarity results of BadT2I (Pixel)}
    \end{subfigure}

    \begin{subfigure}{0.45\columnwidth}
        \centering
        \includegraphics[width=\textwidth]{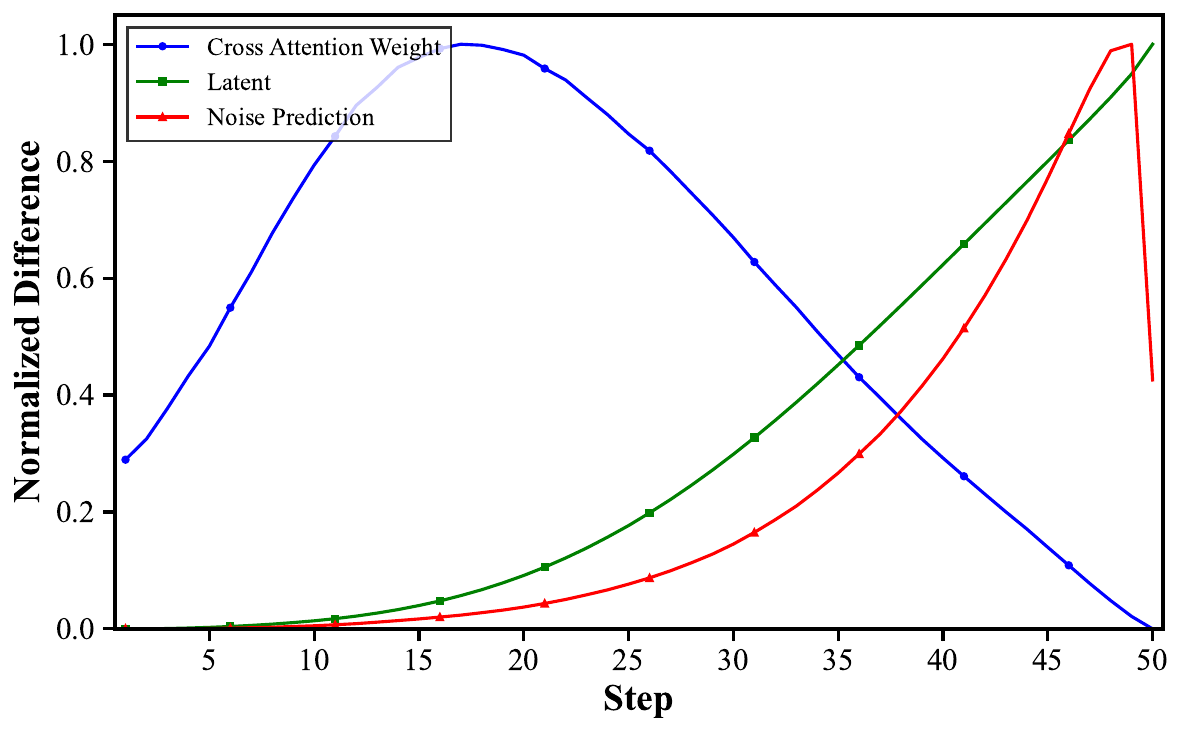}
        \caption{Averaged discrepancy curves of BadT2I (Style)}
    \end{subfigure}
    \hfill 
    \begin{subfigure}{0.45\columnwidth}
        \centering
        \includegraphics[width=\textwidth]{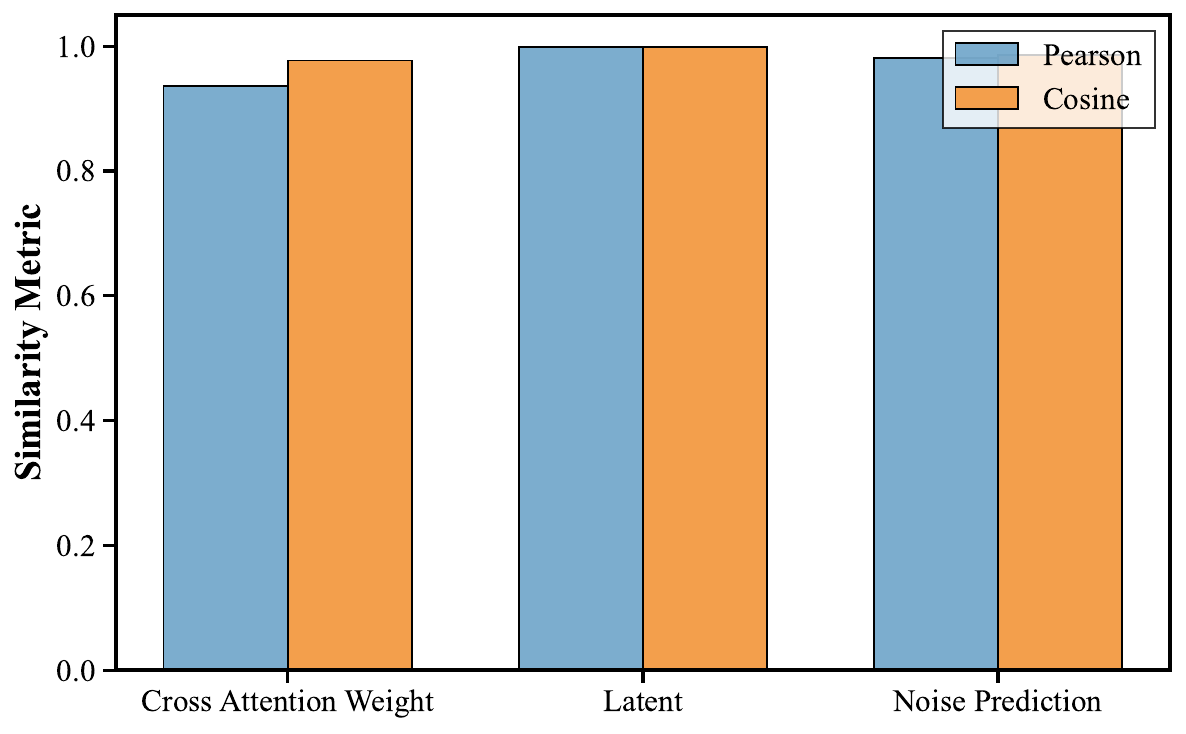}
        \caption{Similarity results of BadT2I (Style)}
    \end{subfigure}
    
    \caption{Discrepancy results in diffusion trajectories between benign and backdoor prompts of BadT2I in Stable Diffusion v1.5.}
    \label{ob2_badt2i_app}
\end{figure}

\begin{figure}[htbp]
    \centering
    \begin{subfigure}{0.45\columnwidth}
        \centering
        \includegraphics[width=\textwidth]{fig/observation/ob2/eviledit.pdf}
        \caption{Averaged discrepancy curves of EvilEdit}
    \end{subfigure}
    \hfill 
    \begin{subfigure}{0.45\columnwidth}
        \centering
        \includegraphics[width=\textwidth]{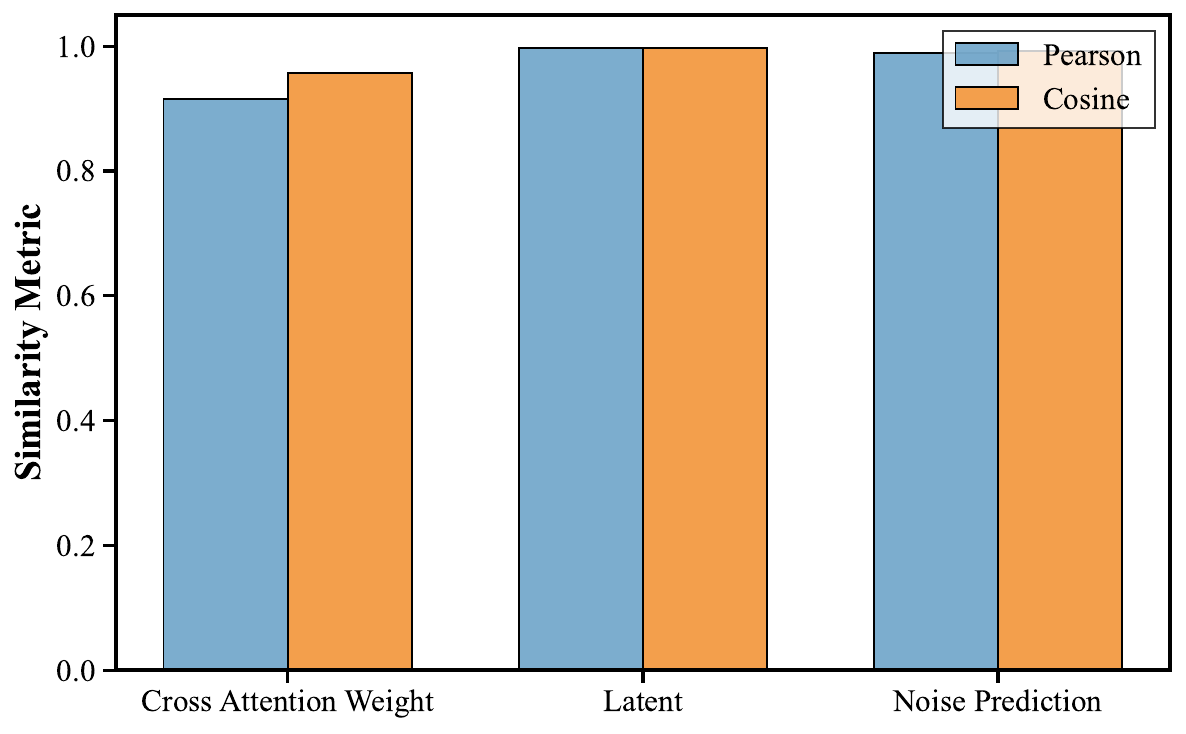}
        \caption{Similarity results of EvilEdit}
    \end{subfigure}
    
    \caption{Discrepancy results in diffusion trajectories between benign and backdoor prompts of EvilEdit in Stable Diffusion v1.5.}
    \label{ob2_eviledit_app}
\end{figure}

\begin{figure}[htbp]
    \centering
    \begin{subfigure}{0.45\columnwidth}
        \centering
        \includegraphics[width=\textwidth]{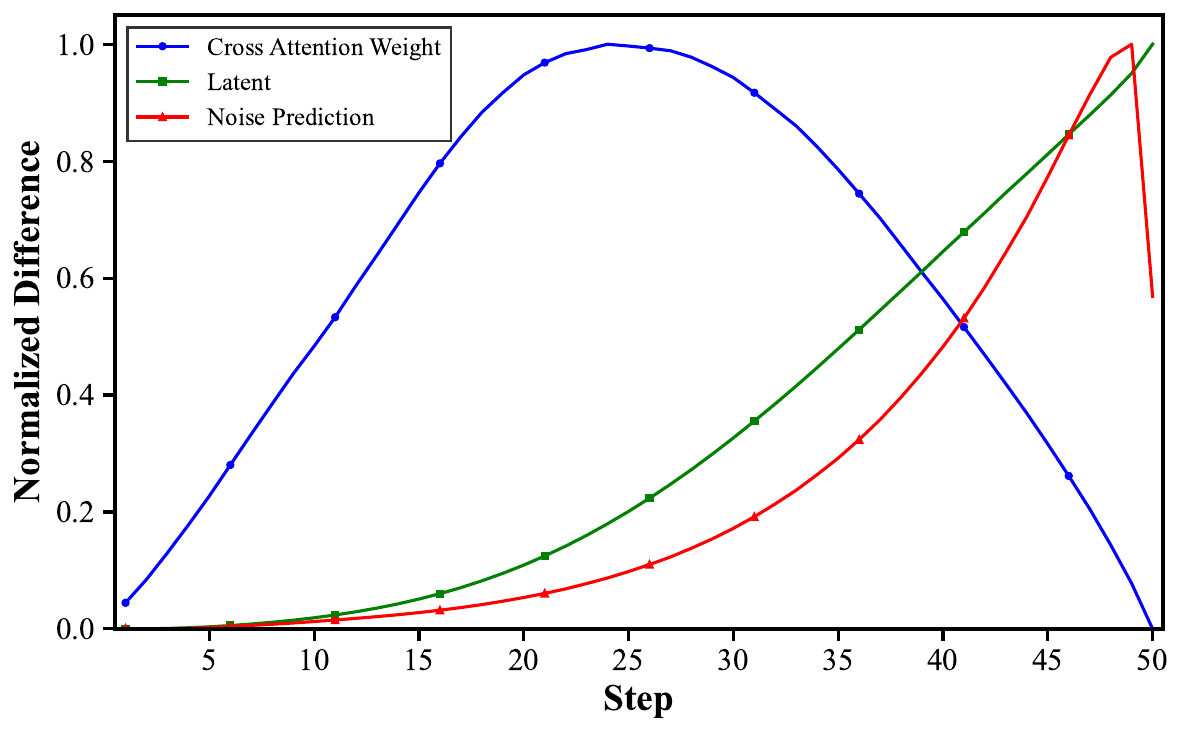}
        \caption{Averaged discrepancy curves of MasqLoRA}
    \end{subfigure}
    \hfill 
    \begin{subfigure}{0.45\columnwidth}
        \centering
        \includegraphics[width=\textwidth]{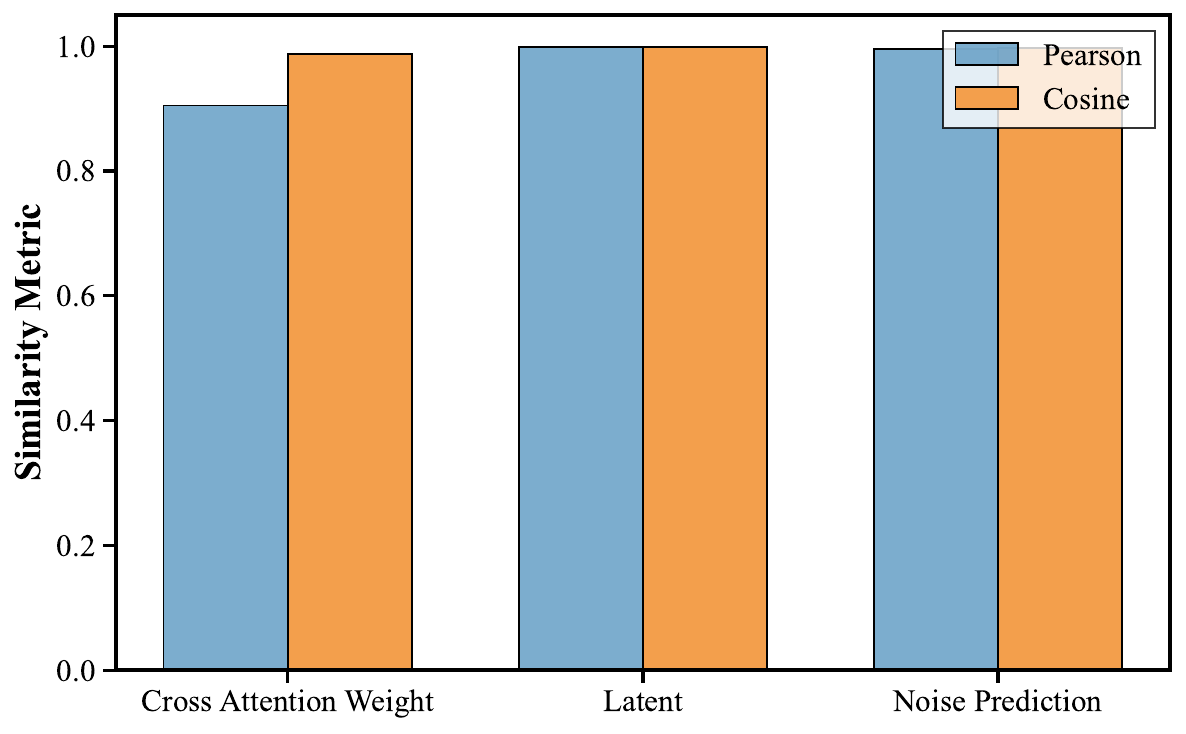}
        \caption{Similarity results of MasqLoRA}
    \end{subfigure}
    
    \caption{Discrepancy results in diffusion trajectories between benign and backdoor prompts of MasqLoRA in Stable Diffusion v1.5.}
    \label{ob2_masqlora_app}
\end{figure}

\begin{figure}[htbp]
    \centering
    \begin{subfigure}{0.45\columnwidth}
        \centering
        \includegraphics[width=\textwidth]{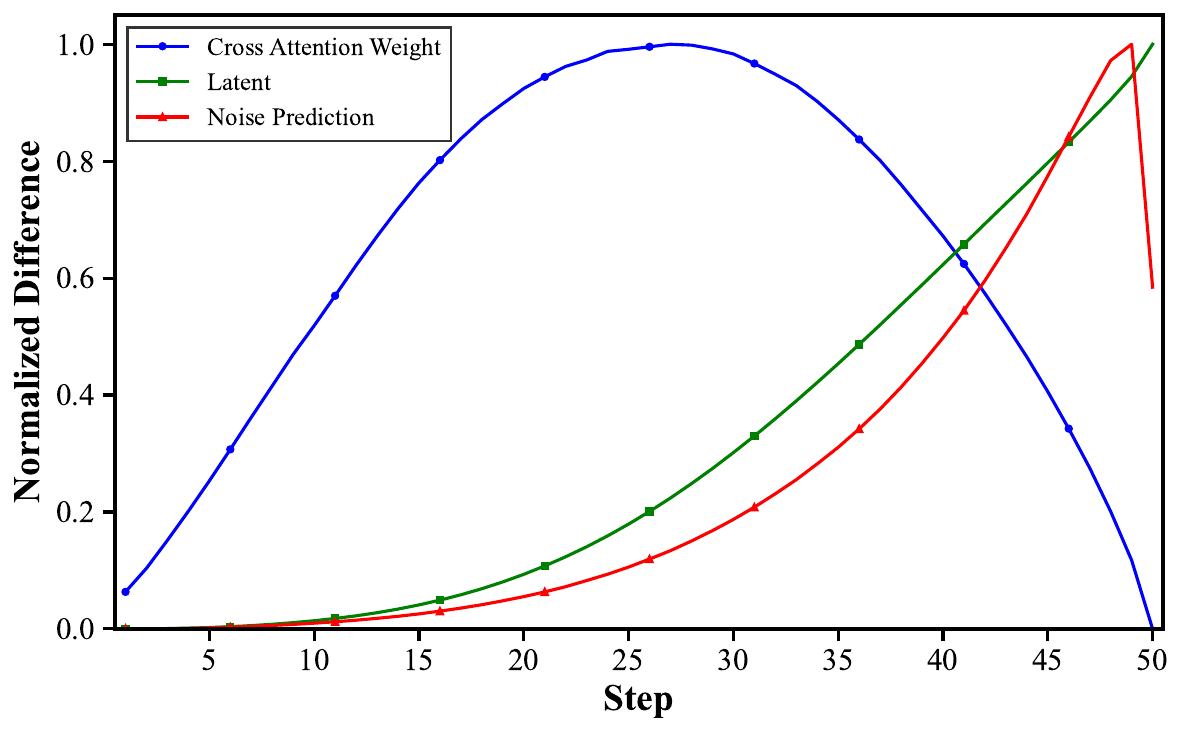}
        \caption{Averaged discrepancy curves of Rickrolling}
    \end{subfigure}
    \hfill 
    \begin{subfigure}{0.45\columnwidth}
        \centering
        \includegraphics[width=\textwidth]{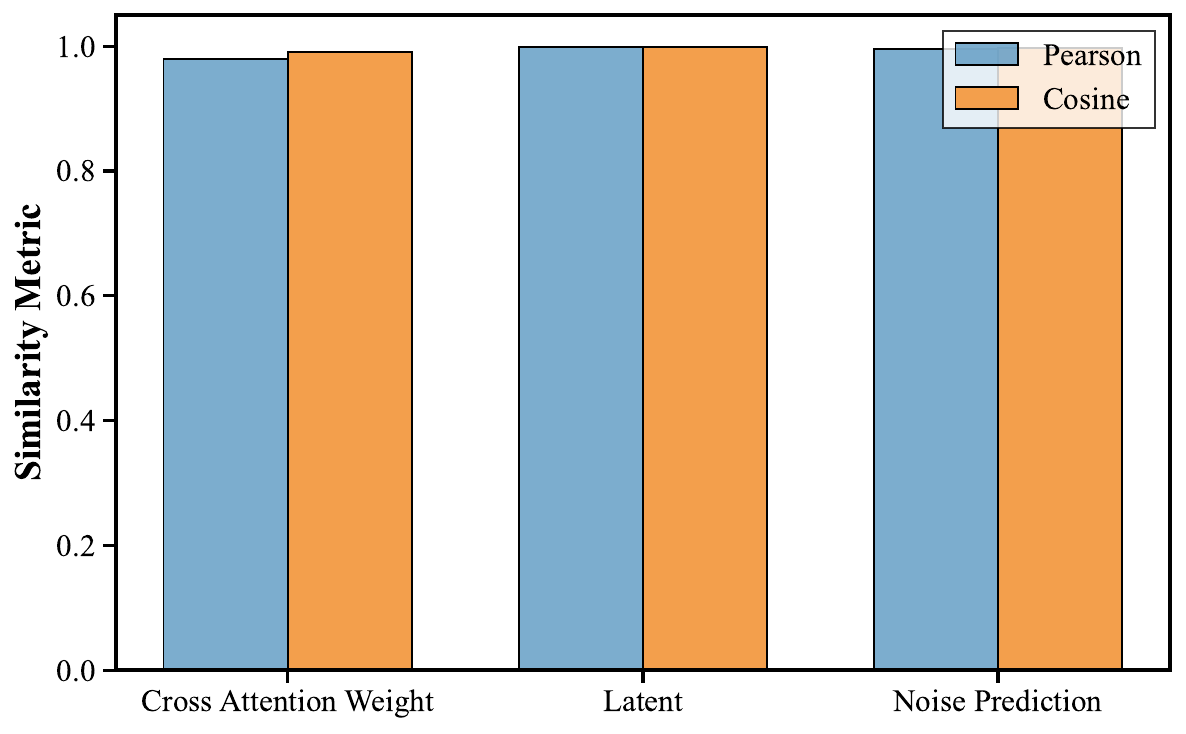}
        \caption{Similarity results of Rickrolling}
    \end{subfigure}
    
    \caption{Discrepancy results in diffusion trajectories between benign and backdoor prompts of Rickrolling in Stable Diffusion v1.5.}
    \label{ob2_rickrolling_app}
\end{figure}

\section{Transition Dynamics Analysis}
\subsection{Proof of Proposition 1}\label{proof_p1}
The proof of Proposition~\ref{proposition1} is as follows:
\begin{proof}
The benign and backdoor transition dynamics are given by:
\begin{equation}
r_{t+1}=F_t(r_t),
\qquad
r_{t+1}^{b}=F_t(r_t^{b})+\delta_t.
\end{equation}
Let $e_t=r_t^{b}-r_t$ denote the deviation between the backdoor and benign trajectories. Then, we obtain:
\begin{align}
e_{t+1}
&=r_{t+1}^{b}-r_{t+1} \nonumber\\
&=F_t(r_t^{b})-F_t(r_t)+\delta_t.
\end{align}
To use the $\ell_2$ norm and the triangle inequality, we can further obtain:
\begin{equation}
\|e_{t+1}\|_2
\leq
\|F_t(r_t^{b})-F_t(r_t)\|_2
+
\|\delta_t\|_2.
\end{equation}

\begin{figure}[t]
    \centering
    \begin{subfigure}{0.45\columnwidth}
        \centering
        \includegraphics[width=\textwidth]{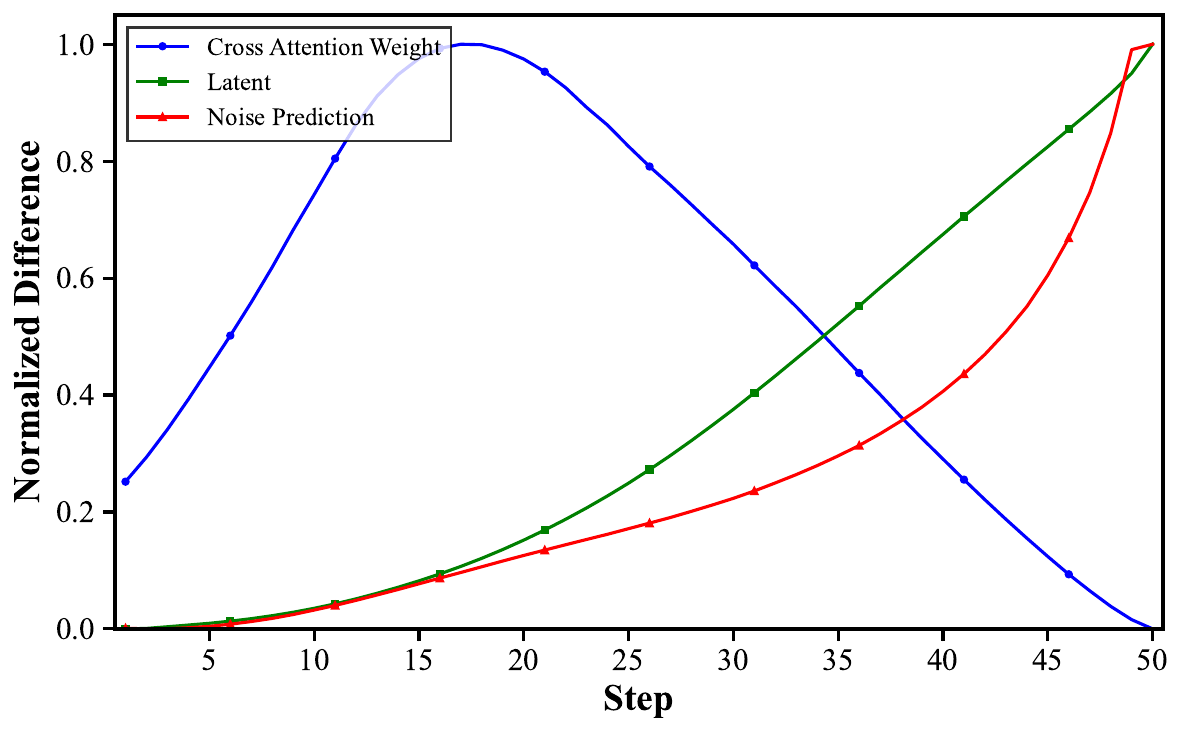}
        \caption{Averaged discrepancy curves of STEBA}
    \end{subfigure}
    \hfill 
    \begin{subfigure}{0.45\columnwidth}
        \centering
        \includegraphics[width=\textwidth]{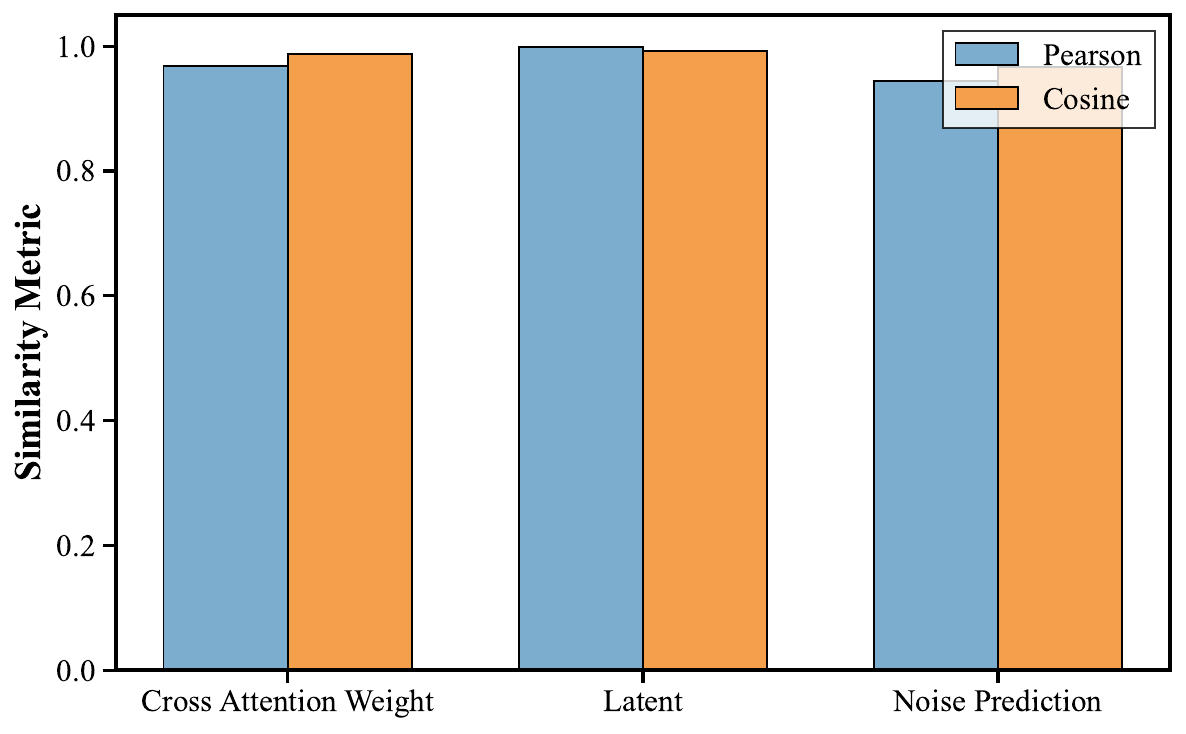}
        \caption{Similarity results of STEBA}
    \end{subfigure}
    
    \caption{Discrepancy results in diffusion trajectories between benign and backdoor prompts of STEBA in Stable Diffusion v1.5.}
    \label{ob2_steba_app}
\end{figure}

From Assumption~\ref{assumption1}, we can have:
\begin{equation}
\|F_t(r_t^{b})-F_t(r_t)\|_2
\leq
L_t\|r_t^{b}-r_t\|_2
=
L_t\|e_t\|_2. 
\end{equation}
Therefore, the following inequation holds:
\begin{equation}
\|e_{t+1}\|_2
\leq
L_t\|e_t\|_2+\|\delta_t\|_2.
\label{eq:one_step_bound}
\end{equation}

Recursively, we obtain:
\begin{align}
\|e_t\|_2
&\leq
L_{t-1}\|e_{t-1}\|_2+\|\delta_{t-1}\|_2\nonumber\\
&\leq
L_{t-1}L_{t-2}\|e_{t-2}\|_2
+
L_{t-1}\|\delta_{t-2}\|_2
+
\|\delta_{t-1}\|_2\nonumber\\
&\leq \cdots \nonumber\\
&\leq
\left(\prod_{j=0}^{t-1}L_j\right)\|e_0\|_2
+
\sum_{k=0}^{t-1}
\left(
\prod_{j=k+1}^{t-1}L_j
\right)
\|\delta_k\|_2.
\end{align}
Since the backdoor prompts includes the triggers, $e_{0} \neq 0$ holds. Thus, we have:
\begin{equation}
\|e_t\|_2 \leq \left(\prod_{j=0}^{t-1}L_j\right)\|e_0\|_2 + \sum_{k=0}^{t-1} \left( \prod_{j=k+1}^{t-1} L_j \right) \|\delta_k\|_2.
\end{equation}

\end{proof}
\subsection{Proof of Proposition 2}\label{proof_p2}
The proof of Proposition~\ref{proposition2} is as follows:
\begin{proof}
We recall that the benign and backdoor transition dynamics are defined as:
\begin{equation}
r_{t+1}=F_t(r_t),
\qquad
r_{t+1}^{b}=F_t(r_t^{b})+\delta_t.
\end{equation}
Assume that the learned predictor $F_{\theta}$ approximates the
normal transition function $F_t$ with the bounded error:
\begin{equation}
\|F_{\theta}(r,t)-F_t(r)\|_2\leq\eta_t.
\end{equation}

Given a benign trajectory, the transition prediction error is:
\begin{align}
R_t^c
&=\|r_{t+1}-F_{\theta}(r_t,t)\|_2\nonumber\\
&=\|F_t(r_t)-F_{\theta}(r_t,t)\|_2\nonumber\\
&\leq\eta_t.
\end{align}

Given a backdoor trajectory, we have:
\begin{align}
R_t^b
&=\|r_{t+1}^{b}-F_{\theta}(r_t^{b},t)\|_2\nonumber\\
&=\|F_t(r_t^{b})+\delta_t-F_{\theta}(r_t^{b},t)\|_2\nonumber\\
&=\|\delta_t+
[F_t(r_t^{b})-F_{\theta}(r_t^{b},t)]\|_2.
\end{align}
This decomposition separates the backdoor-induced perturbation from the approximation error of the learned normal dynamics predictor. Applying the reverse triangle inequality, we obtain:
\begin{align}
R_t^b
&\geq
\|\delta_t\|_2
-
\|F_t(r_t^{b})-F_{\theta}(r_t^{b},t)\|_2\nonumber\\
&\geq
\|\delta_t\|_2-\eta_t.
\end{align}
Therefore, the following two inequations can be obtained:
\begin{equation}
R_t^c\leq\eta_t,
\qquad
R_t^b\geq\|\delta_t\|_2-\eta_t.
\end{equation}
Obviously, sufficiently large backdoor-induced deviations can be distinguished from benign transitions through prediction inconsistency.
\end{proof}

\section{The Details of \textit{NDDL}}
\subsection{Calculation of Multi-space Representations}\label{ml_representations}
\subsubsection{Cross-Attention Weight}
At each diffusion timestep $t$, we denote the cross-attention weight as $A_t \in \mathbb{R}^{H\times S\times N}$, where $H$ denotes the number of attention heads, $S$ denotes the number of spatial query positions and $N$ denotes the number of tokens. Specifically, $A_{t,h,s,n}$ represents the attention weight associated with the $h$-th attention head at spatial position $s$ to the $n$-th token. Since cross-attention weights are normalized along the token dimension, we can obtain $\sum_{n=1}^{N} A_{t,h,s,n}=1$. Then, we extract $A_t$ to form four complementary descriptors: attention entropy, effective rank, token importance and head diversity. These descriptors characterize the concentration, structural complexity, token-level contribution and inter-head variation of cross-attention, respectively. It should be particularly noted that we capture joint attention weight for Stable Diffusion v3.5.

\textbf{Attention entropy:}
Attention entropy presents the concentration of token-wise attention distribution. For the $h$-th head at spatial position $s$, we calculate:
\begin{equation}
\mathcal{H}_{t,h,s}=-\sum_{n=1}^{N}
A_{t,h,s,n}\log\left(A_{t,h,s,n}\right).
\end{equation}
Then, we average the entropy over all spatial query positions:
\begin{equation}
A_{t,h}^{AE}
= \frac{1}{S}
\sum_{s=1}^{S}
\mathcal{H}_{t,h,s}.
\end{equation}
Thus, the entropy descriptor $A_t^{AE} =\left[A_{t,1}^{AE},
A_{t,2}^{AE},\dots,A_{t,H}^{AE}\right]\in\mathbb{R}^{H}$ can be formed. A larger entropy indicates that attention is distributed over a broader set of tokens, whereas a smaller entropy indicates that attention is concentrated on fewer tokens.

\textbf{Effective rank:}
Effective rank depicts the structural complexity of each attention map. For the $h$-th head, we view
$A_{t,h}\in\mathbb{R}^{S\times N}$ as a two-dimensional attention matrix and perform singular value decomposition:
\begin{equation}
A_{t,h}
=
U_{t,h}\Sigma_{t,h}V_{t,h}^{\top}. 
\end{equation}
Let $\left\{\sigma_{t,h,1},\sigma_{t,h,2},\dots,\sigma_{t,h,K} \right\} (K=\min(S,N))$ represent the singular values of $A_{t,h}$. We normalize the singular values as:
\begin{equation}
p_{t,h,k}
=
\frac{\sigma_{t,h,k}}
{\sum_{j=1}^{K}\sigma_{t,h,j}}.
\end{equation}
The entropy of the normalized singular-value distribution is:
\begin{equation}
\mathcal{H}_{t,h}^{\sigma}
=
-\sum_{k=1}^{K}
p_{t,h,k}
\log\left(p_{t,h,k}\right).
\end{equation}
Thus, we can obtain the effective rank:
\begin{equation}
A_{t,h}^{ER}
=
\exp\left(
\mathcal{H}_{t,h}^{\sigma}
\right).
\end{equation}
The descriptor of effective rank is denoted as $A_t^{ER}
=
\left[
A_{t,1}^{ER},
A_{t,2}^{ER},
\dots,
A_{t,H}^{ER}
\right]
\in\mathbb{R}^{H}$. A larger effective rank indicates greater structural complexity of the attention map, whereas a smaller value indicates that its structure is dominated by fewer components.

\textbf{Token importance:}
Token importance measures the overall attention assigned to each token. For the $n$-th token, we average its attention weights over all heads and spatial positions:
\begin{equation}
A_{t,n}^{TI}
=
\frac{1}{HS}
\sum_{h=1}^{H}
\sum_{s=1}^{S}
A_{t,h,s,n}.
\end{equation}

This descriptor is denoted as $A_t^{TI}
=
\left[
A_{t,1}^{TI},
A_{t,2}^{TI},
\dots,
A_{t,N}^{TI}
\right]
\in\mathbb{R}^{N}$, representing the average relative attention assigned to the $n$-th token at timestep $t$.

\textbf{Head diversity:}
Head diversity reflects the variation in token-level attention distribution among different heads. For each head, we first average the attention weights over all spatial positions:
\begin{equation}
\bar{A}_{t,h,n}
=
\frac{1}{S}
\sum_{s=1}^{S}
A_{t,h,s,n}.
\end{equation}
We then normalize the averaged attention weights over the token dimension:
\begin{equation}
\pi_{t,h,n}
=
\frac{\bar{A}_{t,h,n}}
{\sum_{n'=1}^{N}\bar{A}_{t,h,n'}}.
\end{equation}

For each pair of attention heads $(h_1,h_2)$, we define their mixture distribution as:
\begin{equation}
M_{t,h_1,h_2}
=
\frac{1}{2}
\left(
\pi_{t,h_1,n}
+
\pi_{t,h_2,n}
\right).
\end{equation}
JS divergence is calculated as:
\begin{equation}
D_{\mathrm{JS}}
\left(
\pi_{t,h_1,n},
\pi_{t,h_2,n}
\right)
=
\frac{1}{2}
D_{\mathrm{KL}}
\left(
\pi_{t,h_1,n}
\|
M_{t,h_1,h_2}
\right)
+
\frac{1}{2}
D_{\mathrm{KL}}
\left(
\pi_{t,h_2,n}
\|
M_{t,h_1,h_2}
\right),
\end{equation}
where KL divergence is obtained from:
\begin{equation}
D_{\mathrm{KL}}(p\|q)
=
\sum_{n=1}^{N}
p_n
\log
\frac{p_n}{q_n}.
\end{equation}

Finally, we average JS divergences over all distinct attention-head pairs:
\begin{equation}
A_t^{HD}
=
\frac{2}{H(H-1)}
\sum_{1\leq h_1<h_2\leq H}
D_{\mathrm{JS}}
\left(
\pi_{t,h_1,n},
\pi_{t,h_2,n}
\right).
\end{equation}
The head diversity descriptor is
$
A_t^{HD}
\in\mathbb{R}$. A larger value indicates greater variation in token-level attention distribution among different heads, whereas a smaller value indicates more consistent attention behaviors.
\subsubsection{Latent}
At each diffusion timestep $t$, the latent state is denoted as $z_t \in \mathbb{R}^{C\times H\times W}$, where $C$ denotes the number of latent channels, $H$ and $W$ denote the spatial resolution. We construct the compact representation including four descriptors: channel norm, trajectory curvature, frequency energy and temporal variation. These descriptors describe the magnitude, variation degree, spectral structure and local transition behavior of the latent trajectory. Notably, for the boundary steps where the preceding states are unavailable, we apply zero-padding.

\textbf{Channel norm:} Latent norm reflects the overall magnitude of each latent channel. For the $c$-th latent channel $z_{t,c}\in\mathbb{R}^{H\times W}$, we obtain:
\begin{equation}
z_{t,c}^{CN}
=
\left\|z_{t,c}\right\|_2
=
\sqrt{
\sum_{i=1}^{H}
\sum_{j=1}^{W}
z_{t,c,i,j}^{2}
}.
\end{equation}
The descriptor of latent norm is
$z_t^{CN}
=
\left[
z_{t,1}^{CN},
z_{t,2}^{CN},
\dots,
z_{t,C}^{CN}
\right]
\in\mathbb{R}^{C}$. A larger latent norm indicates a larger overall magnitude of the corresponding latent channel, providing a compact representation of the instantaneous latent state.

\textbf{Trajectory curvature:} Latent curvature is calculated from the second-order temporal variation of the latent trajectory. We first obtain the consecutive transition difference of the $c$-th latent channel as:
\begin{equation}
\Delta z_{t,c}
=
z_{t,c}-z_{t-1,c}.
\end{equation}
The curvature is then calculated as the variation between two consecutive transition differences:
\begin{equation}
z_{t,c}^{TC}
=
\left\|
\Delta z_{t,c}
-
\Delta z_{t-1,c}
\right\|_2.
\end{equation}
Equivalently, we have:
\begin{equation}
z_{t,c}^{TC}
=
\left\|
z_{t,c}
-
2z_{t-1,c}
+
z_{t-2,c}
\right\|_2.
\end{equation}
Thus, the descriptor of latent curvature is
$z_t^{TC}
=
\left[
z_{t,1}^{TC},
z_{t,2}^{TC},
\dots,
z_{t,C}^{TC}
\right]
\in\mathbb{R}^{C}$. A larger curvature indicates a stronger change in the local transition of the latent trajectory, while a smaller value suggests a more consistent evolution between consecutive denoising steps.

\textbf{Frequency-domain energy:}
Latent frequency energy reflects the spectral structure of each latent channel. For the $c$-th latent channel, we implement a two-dimensional Fourier transform:
\begin{equation}
\mathcal{F}_{t,c}(u,v)
=
\operatorname{FFT}\left(z_{t,c}\right),
\end{equation}
where $(u,v)$ denotes the coordinate. The corresponding power spectrum is defined as:
\begin{equation}
P_{t,c}(u,v)
=
\left|
\mathcal{F}_{t,c}(u,v)
\right|^{2}.
\end{equation}
We divide the frequency domain into $B$ frequency regions
$\{\Omega_1,\Omega_2,\dots,\Omega_B\}$ and compute the energy for each region:
\begin{equation}
E_{t,c,b}
=
\sum_{(u,v)\in\Omega_b}
P_{t,c}(u,v),
\qquad
b=1,\dots,B.
\end{equation}
To reduce the dynamic range of frequency energy, we exploit a logarithmic transformation:
\begin{equation}
z_{t,c,b}^{FE}
=
\log\left(1+E_{t,c,b}\right).
\end{equation}
Thus, the descriptor of frequency energy is
$
z_t^{FE}
=
\left[
z_{t,1,1}^{FE},
\dots,
z_{t,1,B}^{FE},
\dots,
z_{t,C,1}^{FE},
\dots,
z_{t,C,B}^{FE}
\right]
\in\mathbb{R}^{CB}.$
We set $B=3$ corresponding to low-frequency, middle-frequency and high-frequency regions. This descriptor reveals how latent representation is distributed over different frequency components during denoising.

\textbf{Temporal variation:}
Latent temporal variation reflects the magnitude of the local transition between two consecutive denoising steps. For the $c$-th latent channel, we compute:
\begin{equation}
z_{t,c}^{TV}
=
\left\|
z_{t,c}
-
z_{t-1,c}
\right\|_2.
\end{equation}
The descriptor of temporal variation is
$z_t^{TV}
=
\left[
z_{t,1}^{TV},
z_{t,2}^{TV},
\dots,
z_{t,C}^{TV}
\right]
\in\mathbb{R}^{C}$. A larger norm indicates a stronger transition between consecutive latent states, whereas a smaller value represents relatively mild local evolution.
\subsubsection{Noise}
At each diffusion timestep $t$, we denote the predicted noise as
$\epsilon_t \in \mathbb{R}^{C\times H\times W}$,
where $C$ denotes the number of noise channels, $H$ and $W$ denote the spatial resolution. We obtain the descriptor of noise predictions includes channel norm, channel variance, frequency energy and temporal variation. Similarly, we utilize zero-padding for the unavailable preceding states of the boundary steps. 

\textbf{Channel norm:}
Noise channel norm calculates the overall magnitude of each noise channel. For the $c$-th channel $\epsilon_{t,c} \in \mathbb{R}^{H \times W}$, we have: 
\begin{equation} \epsilon_{t,c}^{CN} = \left\|\epsilon_{t,c}\right\|_2 = \sqrt{ \sum_{i=1}^{H} \sum_{j=1}^{W} \epsilon_{t,c,i,j}^{2} }. 
\end{equation}
The descriptor of channel norm is defined as $\epsilon_t^{CN} = \left[ \epsilon_{t,1}^{CN}, \epsilon_{t,2}^{CN}, \ldots, \epsilon_{t,C}^{CN} \right] \in \mathbb{R}^{C}$.

\textbf{Channel variance:}
Noise channel variance represents the spatial dispersion of the predicted values of each noise channel. 
Given the $c$-th predicted-noise channel $\epsilon_{t,c}\in\mathbb{R}^{H\times W}$, we first compute its spatial mean as:
\begin{equation}
\mu_{t,c}^{\epsilon}
=
\frac{1}{HW}
\sum_{i=1}^{H}
\sum_{j=1}^{W}
\epsilon_{t,c,i,j}.
\end{equation}

The variance of is then obtained:
\begin{equation}
\epsilon_{t,c}^{CV}
=
\frac{1}{HW}
\sum_{i=1}^{H}
\sum_{j=1}^{W}
\left(
\epsilon_{t,c,i,j}
-
\mu_{t,c}^{\epsilon}
\right)^2.
\end{equation}

The descriptor of channel variance at timestep $t$ is denoted as
$\epsilon_t^{CV}
=
\left[
\epsilon_{t,1}^{CV},
\epsilon_{t,2}^{CV},
\ldots,
\epsilon_{t,C}^{CV}
\right]
\in \mathbb{R}^{C}$.
A larger variance indicates a more dispersed spatial distribution of the noise values, whereas a smaller variance indicates that the values are more concentrated around the channel mean. 

\textbf{Frequency-domain energy:}
Noise frequency energy manifests the spectral distribution of each noise channel. 
For the $c$-th channel $\epsilon_{t,c}\in\mathbb{R}^{H\times W}$, we transform the spatial representation into the frequency domain using the two-dimensional discrete Fourier transform:
\begin{equation}
\mathcal{F}_{t,c}^{\epsilon}(u,v)
=
\operatorname{FFT}\left(\epsilon_{t,c}\right),
\end{equation} 

The corresponding power spectrum is then computed as:
\begin{equation}
P_{t,c}^{\epsilon}(u,v)
=
\left|
\mathcal{F}_{t,c}^{\epsilon}(u,v)
\right|^2.
\end{equation}

The energy of the $c$-th channel for each region is obtained:
\begin{equation}
E_{t,c,b}^{\epsilon}
=
\sum_{(u,v)\in\Omega_b}
P_{t,c}^{\epsilon}(u,v),
\qquad
b=1,\dots,B.
\end{equation}

Also, we further apply a logarithmic transformation:
\begin{equation}
\epsilon_{t,c,b}^{FE}
=
\log\left(
1+E_{t,c,b}^{\epsilon}
\right).
\end{equation}

The descriptor of frequency energy is
$\epsilon_t^{FE}
=
\left[
\epsilon_{t,1,1}^{FE},
\dots,
\epsilon_{t,1,B}^{FE},
\dots,
\epsilon_{t,C,1}^{FE},
\dots,
\epsilon_{t,C,B}^{FE}
\right]
\in\mathbb{R}^{CB}$.

\textbf{Temporal variation:}
Noise temporal variation reflects the magnitude of the local transition between two consecutive noise predictions. For the $c$-th noise channel, we compute:
\begin{equation}
\epsilon_{t,c}^{TV}
=
\left\|
\epsilon_{t,c}
-
\epsilon_{t-1,c}
\right\|_2.
\end{equation}
The descriptor of temporal variation is
$\epsilon_t^{TV}
=
\left[
\epsilon_{t,1}^{TV},
\epsilon_{t,2}^{TV},
\dots,
\epsilon_{t,C}^{TV}
\right]
\in\mathbb{R}^{C}$. 
\begin{table}[htbp]
  \centering
  \caption{The descriptors of the trajectory representation mapping in \textit{NDDL}.}
  \label{representation_descriptors}
  \resizebox{\textwidth}{!}{%
    \begin{tabular}{m{3cm}m{4cm}m{6cm}}
      \toprule
      \textbf{Representation} & \textbf{Descriptor} & \textbf{Characterized Property} \\
      \midrule
      \multirow{4}{=}{Cross-Attention Weight}
        & Attention Entropy      & Distribution concentration      \\
        & Effective Rank         & Structural complexity           \\
        & Token Importance       & Token-level contribution        \\
        & Head Diversity         & Inter-head variation            \\
      \midrule
      \multirow{4}{*}{Latent}
        & Channel Norm           & State magnitude                 \\
        & Curvature              & Second-order temporal variation \\
        & Frequency Energy       & Spectral structure              \\
        & Temporal Variation   & First-order temporal variation  \\
      \midrule
      \multirow{4}{*}{Noise Prediction}
        & Channel Norm           & Prediction magnitude            \\
        & Channel Variance       & Spatial dispersion              \\
        & Frequency Energy       & Spectral structure              \\
        & Temporal Variation     & First-order temporal variation  \\
      \bottomrule
    \end{tabular}%
  }
\end{table}

Table~\ref{representation_descriptors} provides the descriptors of the trajectory representation mapping. We do not claim that these descriptors are exhaustive or uniquely optimal. We aim to demonstrate that learning normal transition dynamics in a compact multi-space representation provides an effective strategy for backdoor defense of T2I diffusion models.

\subsection{Normalization Details}\label{Normalization}
We first normalize each feature dimension independently using the statistics computed exclusively from the benign training trajectories. Let $x_{i,k}$ denote the value of the $k$-th feature dimension in the $i$-th benign training sample. For each feature dimension $k$, we compute its median as $m_k$ and the median absolute deviation (MAD) as $ \operatorname{MAD}_k $. The normalized feature is then calculated as:
\begin{equation} 
\hat{x}_{i,k} = \frac{ x_{i,k}-m_k }{ 1.4826\, \max\left(\operatorname{MAD}_k,\varepsilon\right) }, 
\end{equation} 
where $\varepsilon=10^{-8}$ prevents extreme case when the MAD approaches zero. The constant $1.4826$ rescales the MAD to provide a robust estimate comparable to the standard deviation under a Gaussian distribution.

After feature-wise normalization, the obtained descriptors are grouped into three representation blocks: cross-attention, latent and noise prediction. The normalized feature block for representation is represented as: 
\begin{equation} 
\hat{r}_t^{m} \in \mathbb{R}^{d_m}, \qquad m\in\{A,z,\epsilon\},  
\end{equation} 
where $d_m$ is its dimensionality. 

To mitigate the dimensionality-induced imbalance, we then scale each block by the square root of its dimensionality: 
\begin{equation} 
\bar{r}_t^{m} = \frac{ \hat{r}_t^{m} }{ \sqrt{d_m} }, \qquad m\in\{A,z,\epsilon\}. 
\end{equation}

\subsection{The architecture of normal dynamics network}\label{network_appendix}
The normal dynamics network is a residual MLP conditioned on timestep $t$. Given the current state $r_{t}$, the network predicts the state increment $\Delta \hat{r}_{t}$ between adjacent steps. Specially, the timestep $t$ is first mapped to a 32-dimensional vector via sinusoidal positional encoding, and then processed by a two-layer MLP to obtain the time representation $e_{t}$. The state $r_{t}$ and time representation $e_{t}$ are concatenated and projected into a 512-dimensional hidden space via a linear layer followed by LayerNorm and GELU. Then, the hidden output is input into 4 residual blocks. Finally, the output head maps the hidden result to the state space, i.e., the increment $\Delta \hat{r}_{t}$. 
The details of normal dynamics network can be seen in Tables~\ref{tab:ndn} and~\ref{tab:resblock}. 

\begin{table}[h]
\centering
\caption{The architecture of normal dynamics network, where $B$ is batch size and $d$ is the dimension of the compact descriptors.}
\label{tab:ndn}
\small
\begin{tabular}{lll}
\toprule
Stage & Operation & Output shape \\
\midrule
State input    & $r_t$                                          & $B\times d$ \\
Step input      & $t$                                            & $B\times1$   \\
Time embedding  & $\sin/\cos$ $\to$ MLP                          & $B\times32$  \\
Concatenation   & $[r_t;\,e_t]$                                  & $B\times (d + 32)$ \\
Input projection& Linear $\to$ LayerNorm $\to$ GELU              & $B\times512$ \\
Dynamics core   & ResidualBlock $\times 4$                       & $B\times512$ \\
Output head     & Linear $\to$ GELU $\to$ Linear                 & $B\times d$ \\
\midrule
Output          & $\Delta\hat{r}_t$                               & $B\times d$ \\
\bottomrule
\end{tabular}
\end{table}

\begin{table}[h]
\centering
\caption{Residual block utilized in the normal dynamics network.}
\label{tab:resblock}
\small
\begin{tabular}{lll}
\toprule
Sub-layer & Operation & Shape \\
\midrule
Input         & $x$                       & $B\times512$  \\
Linear (up)   & Linear $\to$ GELU         & $B\times1024$ \\
Dropout       & Dropout       & $B\times1024$ \\
Linear (down) & Linear                    & $B\times512$  \\
Normalize     & LayerNorm                  & $B\times512$  \\
Add (skip)    & $x + \text{Normalize}$     & $B\times512$  \\
\bottomrule
\end{tabular}
\end{table}
\subsection{Details of Low-Semantic Words}\label{appendix_neutral_tokens}
For trigger localization, we construct an initial substitution set using common function words with low semantics, such as \emph{a}, \emph{an}, \emph{the}, \emph{this}, \emph{that}, \emph{some}, \emph{and}, \emph{with} and \emph{of}. These words typically introduce less semantic perturbations.

\subsection{Pseudocode of \textit{NDDL}}\label{pseudocode}
Algorithm~\ref{alg:training} presents the training process of normal dynamics model. Algorithms~\ref{alg:detection} and~\ref{alg:localization} show the details of defense process, including both backdoor detection and trigger localization.
\begin{algorithm}[h]
\caption{Normal Diffusion Dynamics Learning}
\label{alg:training}
\begin{algorithmic}[1]
\Require Benign prompt set $\mathcal{P}_{b}$; target diffusion model $\mathcal{M}$
\Ensure Normal dynamics model $F_{\theta}$

\For{each benign prompt $p \in \mathcal{P}_{b}$}
    \State Obtain cross-attention weight
    $A_t$, latent $z_t$ and noise $\epsilon_t$ from $\mathcal{M}$ with prompt $p$
    \For{each denoising timestep $t$}
        \State Extract the compact representation
        $r_t \gets \phi(A_t,z_t,\epsilon_t)$
    \EndFor
\EndFor

\State Estimate feature-wise normalization statistics from benign trajectories
\State Normalize all trajectory representations $r_t$

\For{each benign transition $(r_t,r_{t+1})$}
    \State Predict next-step representation
    $\hat r_{t+1} \gets F_{\theta}(r_t,t)$
    \State Compute dynamics prediction loss
    $\mathcal{L}$ in~\eqref{model_loss}
    \State Update $\theta$ by minimizing $\mathcal{L}$
\EndFor
\State \Return $F_{\theta}$
\end{algorithmic}
\end{algorithm}

\section{Experimental results}

\subsection{Detection Results}\label{detection_appendix}

Table~\ref{detection_results_sdxl} summarizes the backdoor detection results on Stable Diffusion XL. Remarkably, \textit{NDDL} consistently achieves superior performance, exceeding 97\% in both ACC and AUROC across all evaluated attacks. The most significant improvement is observed on STEBA, where \textit{NDDL} surpasses the strongest baseline by a substantial margin, increasing ACC from 83.3\% to 97.2\% and AUROC from 83.1\% to 97.0\%. These results further corroborate the effectiveness and generalizability of \textit{NDDL} across diverse diffusion architectures.

\begin{algorithm}[h]
\caption{Backdoor Detection}
\label{alg:detection}
\begin{algorithmic}[1]
\Require Test prompt $p$; target diffusion model $\mathcal{M}$;
normal dynamics model $F_{\theta}$;
selected timestep interval $\mathcal{T}$;
window length $K$;
detection threshold $\lambda_1$
\Ensure Detection result and anomaly score $S(p)$

\State Run $\mathcal{M}$ with $p$ and collect
$\{A_t,z_t,\epsilon_t\}_{t\in\mathcal{T}}$

\For{each timestep $t \in \mathcal{T}$}
    \State Extract and normalize
    $r_t \gets \phi(A_t,z_t,\epsilon_t)$
\EndFor

\For{each transition $t \in \mathcal{T}$}
    \State $\hat r_{t+1} \gets F_{\theta}(r_t,t)$
    \State $E_t \gets \frac{1}{d} \| r_{t+1} - \hat{r}_{t+1} \|_2^2$
\EndFor

\State Partition $\mathcal{T}$ into equal-length temporal windows
$\mathcal{W}=\{W_1,\ldots,W_n\}$

\For{each window $W_i \in \mathcal{W}$}
    \State $S_i(p) \gets \frac{1}{|W_i|} \sum_{t \in W_i} E_t.$
\EndFor

\State   Obtain the final score $S(p) = \max_{i=1,\dots,n} S_i(p)$

\If{$S(p) > \lambda_1$}
    \State $y \gets \textsc{Backdoor}$
\Else
    \State $y \gets \textsc{Benign}$
\EndIf

\State \Return $y$, $S(p)$
\end{algorithmic}
\end{algorithm}

\begin{algorithm}[h]
\caption{Trigger Localization}
\label{alg:localization}
\begin{algorithmic}[1]
\Require Suspicious prompt $p=(w_1,\ldots,w_m)$;
low-semantic substitute set $\mathcal{V}_c$;
localization threshold $\lambda_2$
\Ensure Localized trigger token set $\mathcal{G}$

\State Compute original anomaly score $S_0 \gets S(p)$
\State $\mathcal{G} \gets \emptyset$

\For{each candidate token $w_i$ in $p$}
    \For{each substitute $v \in \mathcal{V}_c$}
        \State Construct substituted prompt $p^{(i\rightarrow v)}$
        \State Compute $S_i^v \gets S(p^{(i\rightarrow v)})$
    \EndFor

    \State $C_i \gets
    S_0 -
    \operatorname{Median}_{v\in\mathcal{V}_c} S_i^v$

    \If{$C_i > \lambda_{2}$}
        \State $\mathcal{G} \gets \mathcal{G}\cup\{w_i\}$
    \EndIf
\EndFor

\State \Return $\mathcal{G}$
\end{algorithmic}
\end{algorithm}

\subsection{Localization Results}\label{localization_appendix}

As presented in Table~\ref{localization_results_sdxl}, \textit{NDDL} also attains the best trigger-localization performance on Stable Diffusion XL. In addition to one-token and multi-token triggers, \textit{NDDL} further yields substantial improvements in localizing both special-character and sentence-level triggers. These results further underscore the effectiveness and generalizability of \textit{NDDL} in localizing diverse trigger forms across different diffusion architectures.

\begin{table}[htbp]
  \centering
  \caption{Evaluation results of different detection methods against various attacks on Stable Diffusion XL. Bold indicates the best performance, and underlined denotes the second best.}
  \label{detection_results_sdxl}
  \resizebox{\textwidth}{!}{%
    \begin{tabular}{lcccccccccc} 
      \toprule
      \multirow{2}{*}{\textbf{Method}} & \multicolumn{2}{c}{\textbf{BadT2I}} & \multicolumn{2}{c}{\textbf{EvilEdit}} & \multicolumn{2}{c}{\textbf{MasqLoRA}} & \multicolumn{2}{c}{\textbf{Rickrolling}} & \multicolumn{2}{c}{\textbf{STEBA}} \\
      \cmidrule(lr){2-3} \cmidrule(lr){4-5} \cmidrule(lr){6-7} \cmidrule(lr){8-9} \cmidrule(lr){10-11}
       & ACC $\uparrow$& AUROC $\uparrow$ & ACC $\uparrow$ & AUROC $\uparrow$ & ACC $\uparrow$ & AUROC $\uparrow$ & ACC $\uparrow$ & AUROC $\uparrow$ & ACC $\uparrow$ & AUROC $\uparrow$ \\
      \midrule
      UFID      & 64.8 & 65.7 & 65.3 & 67.0 & 65.8 & 66.6 & 55.8 & 56.5 & 52.8 & 53.3 \\
      T2IShield & 84.3 & 84.9 & 83.5 & 84.5 & 86.8 & 87.4 & 81.3 & 81.8 & 75.8 & 76.9 \\
      NaviT2I   & 93.2 & 92.9 & 96.2 & 96.4 & 90.7 & 90.9 & 84.1 & 84.9 & 70.8 & 71.6 \\
      STEDF     & \underline{98.2} & \underline{98.3} & \textbf{98.8} & \textbf{99.1} & \underline{96.3} & \underline{96.1} & \underline{96.5} & \underline{96.9} & \underline{83.3} & \underline{83.1} \\
      \textbf{\textit{NDDL} (Ours)} & \textbf{98.7} & \textbf{99.0} & \underline{98.5} & \underline{98.7} & \textbf{98.1} & \textbf{98.2} & \textbf{97.3} & \textbf{97.1} & \textbf{97.2} & \textbf{97.0} \\
      \bottomrule
    \end{tabular}%
  }
\end{table}

\begin{table}[htbp]
  \centering
  \caption{Evaluation results of trigger localization using different methods on Stable Diffusion
XL.}
  \label{localization_results_sdxl}
  \resizebox{\textwidth}{!}{%
    \begin{tabular}{m{2.5cm}cccccccc}
      \toprule
      \multirow{2}{*}{\textbf{Method}} & \multicolumn{2}{c}{\textbf{One-token}} & \multicolumn{2}{c}{\textbf{Multi-token}} & \multicolumn{2}{c}{\textbf{Special-character}} & \multicolumn{2}{c}{\textbf{Sentence}} \\
      \cmidrule(lr){2-3} \cmidrule(lr){4-5} \cmidrule(lr){6-7} \cmidrule(lr){8-9}
       & ETR $\uparrow$ & AUROC $\uparrow$ & ETR $\uparrow$ & AUROC $\uparrow$ & ETR $\uparrow$ & AUROC $\uparrow$ & ETR $\uparrow$ & AUROC $\uparrow$ \\
      \midrule
      T2IShield & 90.0 & 90.4 & 80.6 & 80.9 & 83.2 & 82.6 & 71.8 & 72.2 \\
      NaviT2I  & \underline{97.1} & \underline{97.6} & \underline{96.5} & \underline{96.5} & \underline{90.2} & \underline{88.1} & \underline{80.7} & \underline{79.2} \\
      \textbf{\textit{NDDL} (Ours)} & \textbf{98.2} & \textbf{98.1} & \textbf{97.2} & \textbf{96.6} & \textbf{94.5} & \textbf{96.0} & \textbf{84.3} & \textbf{84.6} \\
      \bottomrule
    \end{tabular}%
  }
\end{table}

\subsection{Evaluation results on Pixart-$\alpha$}\label{pixart_results_appendix}

Table~\ref{defense_results_pixart} presents a comprehensive comparison of various defense methods on Pixart-$\alpha$. When evaluated against the attack on Pixart-$\alpha$, \textit{NDDL} achieves the best overall performance in both backdoor detection and trigger localization, outperforming all competing defenses. These results further substantiate the generalizability of \textit{NDDL} to the DiT architecture.
\begin{table}[htbp]
  \centering
  \caption{Evaluation results of defense methods on Pixart-$\alpha$.}
  \label{defense_results_pixart}
  \begin{tabular}{lcccc}
    \toprule
    \multirow{2}{*}{\textbf{Method}} & \multicolumn{2}{c}{\textbf{Detection}} & \multicolumn{2}{c}{\textbf{Localization}} \\
    \cmidrule(lr){2-3} \cmidrule(lr){4-5}
     & ACC $\uparrow$ & AUROC $\uparrow$ & ETR $\uparrow$ & AUROC $\uparrow$ \\
    \midrule
    UFID      & 52.5 & 50.6 & -- & -- \\
    NaviT2I   & \underline{85.5} & \underline{87.0} & \underline{78.2} & \underline{78.2} \\
    \textbf{\textit{NDDL} (Ours)} & \textbf{92.3} & \textbf{91.7} & \textbf{87.1} & \textbf{87.5} \\
    \bottomrule
  \end{tabular}
\end{table}

\end{document}

%% file: math_commands.tex
\usepackage{amsmath,amsfonts,bm}

\def\eqref#1{equation~\ref{#1}}

\def\1{\bm{1}}

\DeclareMathAlphabet{\mathsfit}{\encodingdefault}{\sfdefault}{m}{sl}
\SetMathAlphabet{\mathsfit}{bold}{\encodingdefault}{\sfdefault}{bx}{n}

